\title{Fine-tuning Graph Neural Networks by Preserving Graph Generative Patterns}
\author{
    Yifei Sun\textsuperscript{\rm 1},
    Qi Zhu\textsuperscript{\rm 2},
    Yang Yang\textsuperscript{\rm 1}\thanks{Corresponding author.},
    Chunping Wang\textsuperscript{\rm 3},
    Tianyu Fan\textsuperscript{\rm 1},
    Jiajun Zhu\textsuperscript{\rm 1},
    Lei Chen\textsuperscript{\rm 3}
}
\newcommand{\ie}{\emph{i.e.}\xspace} 
\newcommand{\etc}{\emph{etc.}\xspace} 
\newcommand{\wrt}{\emph{w.r.t.}\xspace} 
\def \B {\mathcal{B}}
\newcommand{\ssym}{§}
\newcommand{\vpara}[1]{\vspace{0.05in}\noindent\textbf{#1 }}
\newcommand{\model}{\textbf{\textsc{G-Tuning}}\xspace} 
\newcommand{\Wencoder}{graphon encoder\xspace}
\newtheorem{definition}{Definition}
\newtheorem{theorem}{Theorem}
\newtheorem{lemma}[theorem]{Lemma}
\newtheorem{thm:eg}{Example}
\begin{document}

\maketitle

\begin{abstract}
Recently, the paradigm of pre-training and fine-tuning graph neural networks has been intensively studied and applied in a wide range of graph mining tasks. 
Its success is generally attributed to the structural consistency between pre-training and downstream datasets, which, however, does not hold in many real-world scenarios. 
Existing works have shown that the \textit{structural divergence} between pre-training and downstream graphs significantly limits the transferability when using the vanilla fine-tuning strategy. This divergence leads to model overfitting on pre-training graphs and causes difficulties in capturing the structural properties of the downstream graphs. 
In this paper, we identify the fundamental cause of structural divergence as the discrepancy of \textit{generative patterns} between the pre-training and downstream graphs.
Furthermore, we propose \model to preserve the generative patterns of downstream graphs. 
Given a downstream graph $G$, the core idea is to tune the pre-trained GNN so that it can reconstruct the \textit{generative patterns} of $G$, the graphon $W$.
However, the exact reconstruction of a graphon is known to be computationally expensive. To overcome this challenge, we provide a theoretical analysis that establishes the existence of a set of alternative graphons called graphon bases for any given graphon. By utilizing a linear combination of these graphon bases, we can efficiently approximate $W$. This theoretical finding forms the basis of our proposed model, as it enables effective learning of the graphon bases and their associated coefficients.
Compared with existing algorithms,
\model demonstrates an average improvement of 0.5\% and 2.6\% on in-domain and out-of-domain transfer learning experiments, respectively.
    
\end{abstract}
 \section{Introduction} \label{sec:intro}
The development of graph neural networks (GNNs) has revolutionized many tasks of various domains in recent years. However, labeled data is extremely scarce due to the time-consuming and laborious labeling process. To address this obstacle, the ``pre-train and fine-tune'' paradigm has made substantial progress~\cite{xia2022survey,li2022kpgt,jiao2022energymotivated} and attracted considerable research interests. Specifically, this paradigm involves pre-training a model on a large-scale graph dataset, followed by fine-tuning its parameters on downstream graphs by specific tasks.

\begin{figure*}[t!]
    \centering
    \setlength{\abovecaptionskip}{-0.05cm} 
    \includegraphics[width=\textwidth]{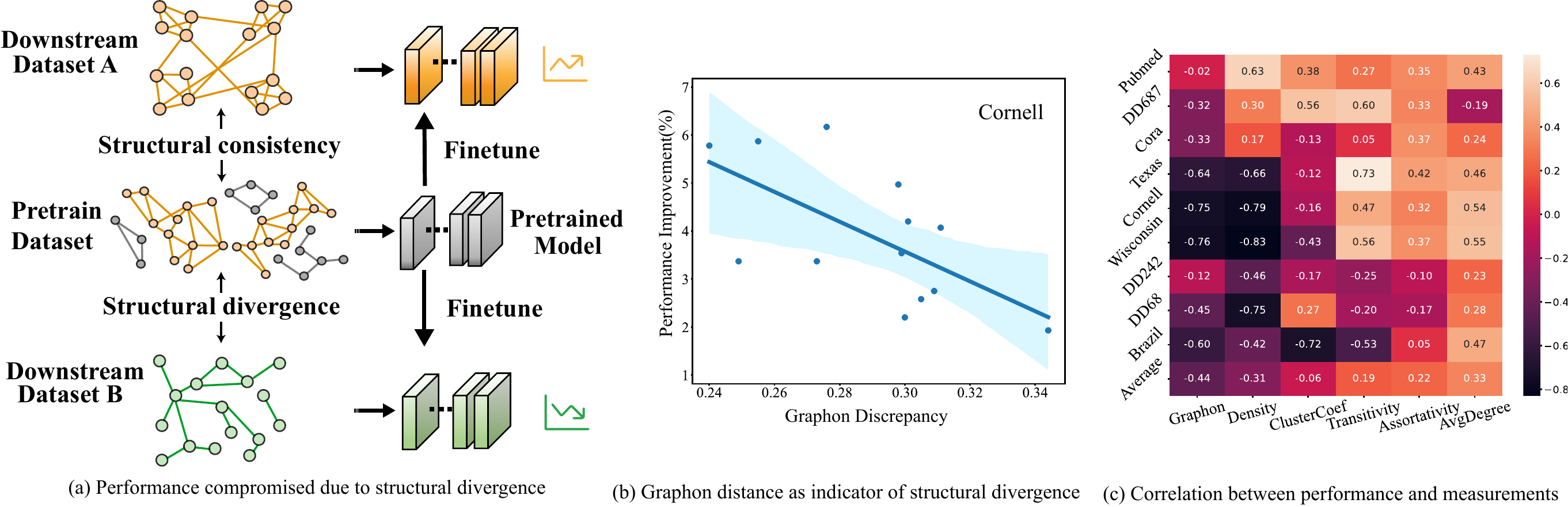}
    \setlength{\belowcaptionskip}{-0.45cm}
    \caption{The sketch and the observations of the structural divergence. (a) The sketch shows the performance under different scenarios. (b) The case study shows that the larger the graphon Gromov-Wasserstain (GW) discrepancy between different pre-train datasets (shown in App \ssym ~A.4) and downstream dataset Cornell~\protect\cite{cornell} is, the less the performance (\%) is promoted. (c) The pearson correlation between: (Horizontal) discrepancy of different graph measurements and graphon between pre-training and downstream datasets, (Vertical) the performance promotion for different downstream datasets when using different pre-train datasets.}
    \label{fig:intro}
\end{figure*}

The success of ``pre-train and fine-tune'' paradigm is generally attributed to the structural consistency between pre-training and downstream graphs~\cite{hu2020gpt,hu2020strategies,qiu2020gcc,sun2021mocl,xu2023better}. 
However, in real-world scenarios, structural patterns vary dramatically across different graphs, and patterns in the downstream graphs may not be readily available in the pre-training graph. In the context of molecular graphs, a well-known out-of-distribution problem arises when the training and testing graphs originate from different environments, characterized by variations in \emph{size} or \emph{scaffold}. As a consequence, the downstream molecular graph data often encompasses numerous novel substructures that has not been encountered during training.
Hence, structural consistency does not always hold. 
Fig~\ref{fig:intro}(a) shows that while structural consistency (shown in orange) between the pre-training and downstream dataset A ensures the promotion of performance, the \textit{structural divergence} (shown in green) causes a degradation of performance when fine-tuned on downstream dataset B. In some cases, it can even lead to worse results than those obtained without pre-training. 

In light of this, we are intrigued by the relationship between structural divergence and the extent of performance improvements on downstream graphs.
Specifically, \textit{graphon} is a well-known non-parametric function on graph that has been proved to effectively describe the generative mechanism of graphs (\emph{i.e.,} \textit{generative patterns})~\cite{lovasz2012large}.
In Fig~\ref{fig:intro}(b), we calculate the Gromov-Wasserstain (GW) discrepancy (a distance metric between geometry objects) of graphons between different pre-training and one test graph and report the corresponding performance on the same downstream graph.
Interestingly, as the difference between graphons of pre-training dataset and that of downstream dataset increases, the performance improvement diminishes.
To further validate this, we compute the Pearson CC (Correlation Coefficient) between other representative graph measurements (e.g., density, transitivity, \emph{etc.}) and the performance improvement.
As Fig~\ref{fig:intro}(c) suggests, most of them cannot reflect the degree of performance improvement, and only the graphon discrepancy is consistently negatively correlated with the degree of performance promotion. 
Hence, we attribute the subpar performance of fine-tuning to the disparity in the \textit{generative patterns} between the pre-training and fine-tuning graphs.
Nevertheless, fine-tuning with respect to \textit{generative patterns} poses significant challenges: (1) the structural information of the pre-training may not be accessible during fine-tuning, and (2) effectively representing intricate semantics within these \textit{generative patterns}, such as graphons, requires careful design considerations.

In this paper, we aim to address these challenges by proposing a fine-tuning strategy, \model, which is agnostic to pre-training data and algorithms.
Specifically, it performs graphon reconstruction of the downstream graphs during fine-tuning.
In order to enable efficient reconstruction, we provide a theoretical result (Theorem~\ref{theo:taylor}) that given a graphon $W$, it's possible to find a set of other graphons, called graphon bases, whose linear combination can closely approximate $W$. 
Then, we develop a graphon decoder that transforms the embeddings from the pre-trained model into a set of coefficients. These coefficients are combined with the structure-aware learnable bases to form the reconstructed graphon. To ensure the fidelity of the reconstructed graphon, we introduce a GW-discrepancy based loss, which minimizes the distance between the approximated graphon and an oracle graphon ~\cite{xu2021learning}. 
Furthermore, by optimizing our proposed \model, we obtain provable results regarding the discriminative subgraphs relevant to the task (Theorem~\ref{theo:discriminative}).

The main contributions of our work are as follows:\footnote{Supplement materials:~\url{https://github.com/zjunet/G-Tuning}}
\begin{compactitem}   
\item We identify the generative patterns of downstream graphs as a crucial step in bridging the gap between pre-training and fine-tuning.

\item Building upon on our theoretical results, we design the model architecture, \model to efficiently reconstruct graphon as generative patterns with rigorous generalization results.

\item Empirically, our method shows an average of 0.47\% and 2.62\% improvement on 8 in-domain and 7 out-of-domain transfer learning datasets over the best baseline.  
\end{compactitem}

\section{Preliminaries} \label{sec:prelim}
\vpara{Notations.}
Let $\mathcal{G} =\left(V, A, X\right)$ denote a graph, where $V$ is the node set, $A\in \{0,1\}^{|{V}|\times |{V}|}$ is the adjacency matrix and $X\in \mathbb{R}^{|{V}|\times d}$ is the node feature matrix where $d$ is the dimension of feature.
$\mathcal{G}_s$ and $\mathcal{G}_t$ denotes a pre-training graph and a downstream graph respectively.
The classic and commonly used pre-training paradigm is to first pre-train the backbone $\Phi$ on abundant unlabeled graphs by a self-supervised task with the loss as $\mathcal{L}_\mathrm{SSL}$.
Then the pre-trained $\Phi$ is employed to fine-tuning on labeled downstream graphs. The embedding ${H} \in \mathbb{R}^{|{V}|\times {d}}$ with the hidden dimension $d$ encoded by $\Phi$ is further input to a randomly initialized task-specific shallow model $f_{\mathbf{\phi}}$. The goal of fine-tune is to adapt both $\Phi$ and $f_{\mathbf{\phi}}$ for the downstream task with the loss $\mathcal{L}_\mathrm{task}$ and the label ${Y}$. 
The pre-training setup has two variations: one is the in-domain setting, where $\mathcal{G}_s$ and $\mathcal{G}_t$ come from the same domain, and the other is the out-of-domain setting, where $\mathcal{G}_s$ and $\mathcal{G}_t$ originate from different domains. In the latter case, the structural divergence between $\mathcal{G}_s$ and $\mathcal{G}_t$ is greater, posing greater challenges for fine-tuning.

\begin{definition}[Graph generative patterns]
Graph generative patterns are data distributions parameterized by $\Theta$ where the observed graphs $\{G_1,...,G_n\}$ are sampled from, $G_i \sim P({G};\Theta)$.
\end{definition}
According to our definition, $\Theta$ can be traditional graph generative model such as Erdos-Rényi~\citep{erdHos1959random}, stochastic block model~\citep{airoldi2008mixed}, forest-fire graph~\citep{leskovec2007graph} and \etc
Besides, $\Theta$ can also be any deep generative model like GraphRNN~\citep{you2018graphrnn}. In this paper, we propose a theoretically sound fine-tuning framework by preserving the graph generative pattern of the downstream graphs.

\vpara{Graphon.}  
A graphon~\cite{airoldi2013stochastic}, short for "graph function", can be interpreted as a generalization of a graph with an uncountable number of nodes or a graph generative model or, more important for this work, a mathematical object representing $\Theta$ from graph generative patterns $P({G};\Theta)$. 
Formally, a graphon is a continuous and symmetric function $W : [0, 1]^2 \rightarrow [0, 1]$. Given two points $u_i, u_j \in [0,1]$ as ``nodes'', ${W}(i,j) \in [0,1]$ indicates the probability of them forming an edge. 
The main idea of graphon is that when we extract subgraphs from the observed graph, the structure of these subgraphs becomes increasingly similar to that of the observed one as we increase the size of subgraphs. The structure then converges in some sense to a limit object, graphon. 
The convergence is defined via the convergence of homomorphism densities.
Homomorphism density $t(F,G)$ is used to measure the relative frequency that homomorphism of graph $F$ appears in graph $G$: 
$t(F, G)=\frac{|\operatorname{hom}(F, G)|}{\left|V_G\right|^{\left|V_H\right|}}$
, which can be seen as the probability that a random mapping of vertices from $F$ to $G$ is a homomorphism. Thus, the convergence can be formalized as $\lim\limits_{n \to \infty} {t\left(F, G_n\right)}=t(F, W)$.
When used as the graph generative patterns, the adjacency matrix $A$ of graph $\mathcal{G}$ with $N$ nodes are sampled from $P({G};W)$ as follows,
\begin{equation}
    v \sim \mathbb{U}(0,1), v \in V ; A_{ij} \sim \text{Ber}(W(v_i, v_j)), \forall i,j \in [N].
\end{equation}
where Ber$(\cdot)$ is the Bernoulli distribution. 
Since there is no available closed-form expression of graphon, existing works mainly employ a two-dimensional step function, which can be seen as a matrix, to represent a graphon~\cite{xu2021learning,han2022g}. In fact, the weak regularity lemma of graphon~\cite{lovasz2012large} indicates that an arbitrary graphon can be approximated well by a two-dimensional step function. Hence, we follow the above mentioned works to employ a step function $W \in [0,1]^{D \times D}$ to represent a graphon, where $D$ is a hyper-parameter. 

\section{Related Work} \label{sec:related}

\vpara{Fine-tuning strategies.}
Designing fine-tuning strategies first attracts attention in computer vision (CV), which can be categorized into model parameter regularization and feature regularization. 
L2\_SP\cite{xuhong2018explicit} uses $L^2$ distance to constrain the parameters around pre-trained ones.
StochNorm~\cite{kou2020stochastic} replaces BN(batch normalization) layers in pre-trained model with their StochNorm layers.
DELTA~\cite{li2018delta} selects features with channel-wise attention to constrain. BSS~\cite{chen2019catastrophic} penalizes small eigenvalues of features to prevent negative transfer.
However, 
there is only one work focusing on promoting performance of downstream task during fine-tuning phase specially for the graph structured data. 
GTOT-Tuning~\cite{ijcai2022p518} presents an optimal transport-based feature regularization, which achieves node-level transport through graph structure. 
Moreover, the gap between pre-train and fine-tune is also noted in L2P~\cite{lu2021learning} and AUX-TS~\cite{han2021adaptive}. 
Specifically, L2P leverages meta-learning to adjust tasks during pre-training stage. AUX-TS adaptively selects and combines auxiliary tasks with the target task in fine-tuning stage, which means only one pre-train task is not enough. 
However, they both require to use the same dataset for both pre-train and fine-tune, and to insert auxiliary tasks in pre-training phase , which indicates they are not generally applicable.
Unlike them, we focus on the fine-tuning phase and propose that preserving generative patterns of downstream graphs during fine-tuning is the key to mine knowledge from pre-trained GNNs without altering the pre-training process.

\vpara{Graphon.}
Over the past decade, graphon has been studied intensively as a mathematical object~\cite{borgs2008convergent,borgs2012convergent,lovasz2006limits,lovasz2012large} and been applied broadly, like graph signal processing~\cite{ruiz2020fourier,ruiz2021graphon}, game theory~\cite{parise2019graphon}, network science~\cite{avella2018centrality,vizuete2021laplacian}.
Moreover, G-mixup~\cite{han2022g} is proposed to conduct data augmentation for graph classification since a graphon can serve as a graph generator. From the another perspective of being the graph limit, \cite{ruiz2020graphon} leverage graphon to analyse the transferability of GNNs.
Graphon, as limit of graphs, forms a natural method to describe graphs and encapsulates the generative patterns ~\cite{borgs2017graphons}.
Thus, we incorporate the graphon into the fine-tuning stage to preserve the generate patterns of downstream graphs.
In App \ssym~A.1, we provide more detailed related work about graph pre-training and graphon.
\begin{figure*}[t!]
    \centering
    \includegraphics[width=.90\textwidth]{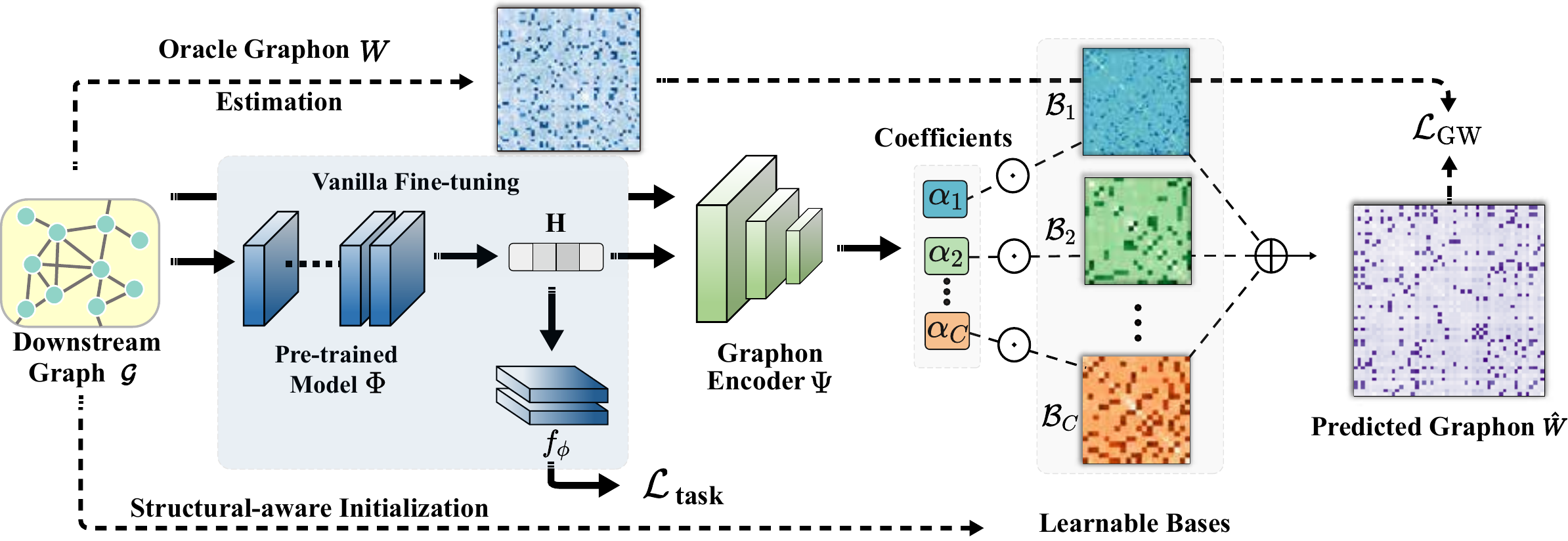}
    \setlength{\belowcaptionskip}{-0.3cm} 
    \caption{Overall workflow of \model. Each basis $\B_i$ and predicted graphon $\hat{W}$ is produced during actual training.}
    \label{fig:model}
\end{figure*}

\section{\model} \label{sec:model}

\subsection{Framework Overview}\label{subsec:general}
Similar to our seminal results in Fig~\ref{fig:intro}, recent research~\cite{zhu2021transfer} has started to analyze the transfer performance of a pre-trained GNN \wrt discrepancy between pre-train and fine-tune graphs. 
However, the pre-training graphs are typically not accessible during the fine-tuning phase. Thus, \model aims to adapt the pre-trained GNN to the fine-tuning graphs by preserving the generative patterns.
During fine-tuning, a pre-trained GNN $\Phi$ obtains latent representations $H$ for downstream graphs $\mathcal{G}_t = \{G_1,...,G_n\}$ and feeds them into task-specific layers $f_\phi$ to train with the fine-tuning labels $Y$. 
For a specific graph $G_i(A,X)$, the pre-trained node embeddings $H$ are obtained by pre-trained model $\Phi$: 
\begin{equation} \label{eq:standard_gnn}
    \mathcal{L}_\text{task} = \mathcal{L}_\text{CE}(f_{\phi}(H),{Y}), \quad \ H = \Phi(A, X),
\end{equation}
\noindent where $\Phi$ can be any pre-training backbone model, $\mathcal{L}_\text{CE}$ is the cross entropy classification loss and $f_\phi$ is a shallow neural network $f_\phi: H \rightarrow \hat{Y}$.

However, the vanilla strategy may fail to improve fine-tuning performance due to the large discrepancy between pre-training and fine-tuning graphs, namely \emph{negative transfer}. 
To alleviate this, we propose to enable the pre-trained GNN $\Phi$ to preserve the generative patterns of the downstream graphs $\mathcal{G}_t$ by reconstructing their graphons $W$ (overall workflow in Fig~\ref{fig:model}).
However, the pre-trained model is inherently biased towards the generative patterns of the pre-training graphs. Consequently, at the begining of the fine-tuning, the embeddings $H$ also contain the bias from the pre-training data. Thus, we need both the $H$ and graph structure $A$ of downstream graph to reconstruct the graphon.
Specifically, we design a graphon reconstruction module $\Omega$ to reconstruct the graphon $\hat{W}$ of each downstream graph $G_i(A,X) \in \mathcal{G}_t$:
\begin{equation} \label{eq:total_rec}
    {\hat{W}}={\Omega}({A}, {H}),
\end{equation}
However, the computation of oracle graphons $W$ of real-world graphs in continuous function form itself is intractable~\cite{han2022g}. Thus, the graphon reconstruction module $\Omega$ approximates an estimated oracle graphon (\ie $W \in[0,1]^{D \times D}$) for each downstream graph through $\mathcal{L}_\text{aux}$, D is the size of oracle graphon.
Finally, in the framework of \model (Fig~\ref{fig:model}), we leverage both the downstream task loss and our reconstruction loss to optimize the parameters of pre-trained GNN encoder $\Phi$, the layer $f_\phi$ and graphon reconstruction module $\Omega$ as follows,
\begin{equation} \label{Eq:total_loss}
    \mathcal{L} = \mathcal{L}_\text{task} + \lambda  \mathcal{L}_\model(W, \hat{W}).
\end{equation}
\noindent where $\lambda$ is a hyper-parameter of \model.

\subsection{Approximating Oracle Graphon} 
In this section, we first discuss the calculation of the underlying graphon, denoted as $W$, given downstream graphs $\mathcal{G}_t$. Then we introduced proposed algorithm to reconstruct this graphon as an auxiliary loss during the fine-tuning process.

Extensive research has been conducted on the methods for estimating graphons~\cite{airoldi2013stochastic,chatterjee2015matrix,pensky2019dynamic,ruiz2020graphon} from observed graphs.
In this paper, the oracle graphon $W$ used in Eq~\ref{Eq:total_loss} can be estimated via structured Gromov-Wasserstein barycenters (SGB)~\cite{xu2021learning}. Suppose there are N observed graphs $\{G_n\}_{n=1}^N$ and their adjacency matrices $\{A_n\}_{n=1}^N$, at each step $t$, 
\begin{eqnarray}
\min_{\mathbf{T}_n\in\Pi({\mu},{\mu}_W)} d_{{gw}, 2}^2\left({W_t}, {A_n}\right), \\
{W_{t+1}} = \frac{1}{\mu_W \mu_W^{\top}} \sum_{n=1}^N  \mathbf{T_n}^{\top} {A_n} \mathbf{T_n}, \label{eq:prox}
\end{eqnarray}
where $W_{t+1} \in [0,1]^{D \times D}$ is calculated barycenters with the optimal transportation plan $\{\mathbf{T_n}\}_{n=1}^{N}$ of 2-order Gromov-Wasserstein distance $d_{\mathrm{gw}, 2}^2$. The probability measures ${\mu}_W$ is estimated by merging and sorting the normalized node degrees in $\{A_n\}_{n=1}^N$. Further details regarding the calculations of $d_{\mathrm{gw}, 2}^2$ and the iterative procedure for estimating the oracle graphon can be found in App \ssym~A.3.

After obtaining the oracle graphon $W \in [0,1]^{D \times D}$ from the downstream graphs and the reconstructed graphon $\hat{W} \in [0,1]^{M \times M}$ from the reconstruction module $\Omega$, we also adopt the Gromov-Wasserstein distance to measure the distance between the two unaligned graphons.
Formally, our $p$-order GW distance is calculated as:
\begin{equation}
\label{Eq:loss}
 \mathcal{L}_\model(W, \hat{W})= \min\limits_{T\in \Gamma}\sum_{i,j,k,l}(W_{i,k}-\hat{W}_{j,l})^p T_{i,j} T_{k,l}, 
\end{equation}
\noindent where $\Gamma$ is the set of transportation plans that satisfy $\Gamma=\{T \in \mathbb{R}_{+}^{D \times M} | T 1_M= 1_D, T^\intercal 1_D=1_M \}$. In each epoch, an optimal $T$ minimizes the transportation cost between two graphons $(W, \hat{W})$. Fixing $T$, the graphon reconstruction module $\Omega$ is optimized to minimize the GW discrepancy. We will introduce the model design of $\Psi$ in the next section for scalable training and inference.

\subsection{Efficient Optimization of $\mathcal{L}_\model$} \label{sec:Optimization}
A straightforward way to approximate the graphon is to learn a mapping function graph structure $A$ and node embedding $H$ to the target $W$. For example, we can simply define the graphon reconstruction module $\Omega$ in Eq~\ref{eq:total_rec} as a GNN or a shallow MLP. 
Suppose we have $M$ graphs and each graph has at most $|V|$ nodes, brute-force graphon reconstruction from the pre-trained node embeddings $H \in \mathbb{R}^{|V| \times d}$ requires a large number of parameters, e.g., $\Psi: \mathbb{R}^{|V| \times d} \rightarrow \mathbb{R}^{M \times M}$. Additionally, properties such as permutation-variance of graphons are not guaranteed without a carefully designed architecture.

To this end, we first establish a theorem for graphon decomposition and utilize it for efficient graphon approximation. 
Specifically, we propose that any graphon can be reconstructed by a linear combination of graphon bases $\B_k \in \boldsymbol{\B}$.

\begin{theorem}\label{theo:taylor}
$\forall \ W(x, y)\in\mathcal{W}_{C + 1}$,  there exists $C$ graphon bases ${\B_k(x, y)}$ that satisfies $W(x, y) = \sum^{C}_{k = 1}\alpha_k \B_k(x, y) + R_{C+1}(x, y)$, where $\alpha_i\in \mathbb{R} $ and $R_{C+1}(x, y)$ is the remainder of order $C+1$.
\begin{eqnarray}
W(x,y) =&\sum^{C}_{k=0} \alpha_k \B_k(x,y) +R_{C+1}.
\end{eqnarray}
\end{theorem}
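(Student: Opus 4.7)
The plan is to prove the theorem by a direct application of Taylor's theorem in two variables to the graphon $W$. The notation $\mathcal{W}_{C+1}$ presumably denotes the space of graphons that admit continuous partial derivatives up to order $C+1$ on $[0,1]^2$, which is exactly the regularity needed to ensure a Taylor expansion with a well-behaved remainder.

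First, I would fix a reference point $(x_0, y_0) \in [0,1]^2$ (a canonical choice is the centroid $(1/2, 1/2)$, though any interior point works) and invoke the multivariate Taylor formula of order $C$, writing
\begin{equation*}
W(x,y) = \sum_{k=0}^{C} \frac{1}{k!} \sum_{i=0}^{k} \binom{k}{i} \frac{\partial^{k} W(x_0,y_0)}{\partial x^{i}\, \partial y^{k-i}} (x-x_0)^{i}(y-y_0)^{k-i} + R_{C+1}(x,y),
\end{equation*}
where $R_{C+1}$ is the Lagrange (or integral) form of the remainder of order $C+1$, which is bounded uniformly on $[0,1]^2$ by the sup norm of the $(C+1)$-th derivatives of $W$.

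Next, I would collect the terms of each order $k$ into a single symmetric function. Because any graphon must satisfy $W(x,y)=W(y,x)$, I would symmetrize the degree-$k$ polynomial by averaging it with its transpose under $(x,y)\mapsto(y,x)$, and then normalize so that the resulting function maps into $[0,1]$. Calling this normalized, symmetric degree-$k$ polynomial $\mathcal{B}_k(x,y)$, the normalization constants and the partial-derivative values from the Taylor formula are then pulled together into a single scalar $\alpha_k \in \mathbb{R}$, producing the desired identity $W(x,y)=\sum_{k=0}^{C}\alpha_k\mathcal{B}_k(x,y)+R_{C+1}(x,y)$.

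The main obstacle is ensuring that each $\mathcal{B}_k$ is a legitimate graphon (symmetric and $[0,1]$-valued) \emph{without} altering the equality of the decomposition. The symmetrization step is harmless because $W$ is already symmetric, so averaging degree-$k$ terms under swapping coordinates does not change the total sum at order $k$; the normalization step requires that the sup-norm of each symmetrized polynomial on $[0,1]^2$ be finite and positive (which is immediate since they are continuous on a compact set and non-degenerate), so dividing by this sup and absorbing the reciprocal into $\alpha_k$ preserves the identity. A secondary subtlety is the bound on $R_{C+1}$: since $W \in \mathcal{W}_{C+1}$ guarantees bounded $(C+1)$-th derivatives, the remainder satisfies $\|R_{C+1}\|_\infty = O(\|(x-x_0,y-y_0)\|^{C+1})$, which legitimately justifies calling it a ``remainder of order $C+1$.''
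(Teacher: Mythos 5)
Your proof follows essentially the same route as the paper's: a multivariate Taylor expansion of $W$ (the paper expands at $(0,0)$ rather than $(1/2,1/2)$), regrouping of the degree-$k$ homogeneous terms into normalized symmetric functions $\mathcal{B}_k$ serving as graphon bases, and absorption of the normalizing constants into the scalar coefficients $\alpha_k$, with $R_{C+1}$ the Taylor remainder. The remaining differences are cosmetic — your explicit symmetrization is not needed since the symmetry of $W$ already makes the mixed partials at the expansion point palindromic, and the paper normalizes each degree-$k$ block by the sum of its coefficients $I_k$ rather than by a sup-norm — so the two arguments match in substance and in level of rigor.
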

For every $W \in \mathcal{W}_{C + 1}$, it has continuous partial derivatives of order ${0,...,C + 1}$ at any point, so we call apply Taylor expansion at $(x,y)$. See full proof at App \ssym~A.2.
Then we set the graphon basis, denoted as $\B_k:[0,1]^2 \rightarrow [0,1]$, to keep the same size as the graphon $\hat{W}$, i.e., $\B_k \in [0,1]^{M \times M}$.
According to the above theorem, instead of direct graphon reconstruction, we can estimate the bases  $\boldsymbol{\B}$ and coefficients $\boldsymbol{\alpha}$. 
Each coefficient $\alpha_k$ reflects how close the corresponding graphon basis $\B_k$ is to the graphon being approximated. 
To summarize, we devise our graphon reconstruction module $\Omega$ as another GNN to transform the encoded node representation $H$ and graph structure $A$ into coefficients 
$\boldsymbol{\alpha}=\{\alpha_1,...,\alpha_C\}$: 
\begin{equation} \label{eq:Wencoder}
    \boldsymbol{\alpha}=\Psi(A,H), 
\end{equation} 
\noindent where $\Psi:(A,H) \rightarrow \mathbb{R}_{+}^C$ maps the embeddings ${H}$ from Eq~\ref{eq:standard_gnn} into $C$ coefficients of graphon bases with $\sum \alpha_i=1$.
\model also learns a set of graphon bases $\B_i \in \mathbb{R}^{K \times K},  \B_i \in \boldsymbol{\B}$ to get the optimal bases to reconstruct the oracle graphon $W$.
Overall, the amount of the parameters can be greatly reduced from $\mathcal{O}(|V|hM^2)$ to $\mathcal{O}(C \cdot (M^2 +|V|d))$ since $C \ll \min\{M^2, |V|d\}$. 
It is worth noting that bases $\boldsymbol{\B}$ are graphons related with the structure $\{A_n\}_{n=1}^N$ of downstream graphs. Therefore, we initialize each basis $\B_k$ as follows: (1) for each run we randomly select a downstream graph $G_i \in \mathcal{G}_t$ and its adjacency matrix $A_{i}$; (2) sort the adjacency matrix in descending order of its node degrees; 
(3) randomly draw a graphon with the size $M$ from the sorted $A_{i}$ 
as the initialized $b_k$.
Moreover, we constrain the learbable bases between $[0,1]$ by setting ${\B}_k = \sigma(b_{k})$ where $\sigma$ is Sigmoid function. 
Finally, as the Theorem~\ref{theo:taylor} implies, the approximated graphon $\hat{W}$ can be aggregated as,
\begin{equation}
\hat{W} = \sum^{C}_{k=0} \boldsymbol{\alpha}_k \boldsymbol{\B}_k.
\end{equation}

Moreover, as the theory indicates, the number of bases corresponds to the order of Taylor expansion (\ie, the more accurate the approximation). 
In \model, we have two major hyper-parameters: the number of learnable bases $C$ and graphon size $M$. The overall learning process of \model can be found in Algorithm 1 from App \ssym~A.3.

\vpara{Complexity Analysis.} 
We now analyze the additional time complexity of \model besides vanilla tuning. Suppose $|V|$ and $|E|$ are the average number of nodes and edges, $d$ is the hidden dimension and $C$ is number of graphon bases.
The total time complexity of \model includes two parts:
(i) the graphon decoder costs $\mathcal{O}(CM^2 + |V|d)$ ;
(ii) the oracle graphon estimation costs $\mathcal{O}(|E|D+|V|D^2)$~\cite{xu2021learning}, where $D$ is the size of oracle graphon.
Thus, the overall additional time complexity is $\mathcal{O}(|E|D+|V|D^2+CM^2+|V|d)$, which is the same magnitudes with the vanilla tuning process of $\mathcal{O}(|E|d+|V|d)$ assuming $M,D \ll |V|$.

\subsection{Theoretical analysis} \label{subsec:theory}
In this section, we further illustrate the ability of \model to capture the discriminative subgraphs present in the downstream graph $\mathcal{G}_t$.
We first give the definition of the discriminative subgraph.

\begin{definition}[Discriminative subgraph] \label{def:discriminative}
A discriminative subgraph $F_{G}$ of graph ${G}$ with a label ${Y}$ is the subgraph $F\subseteq {{G}}$  that minimize the information from ${G}$ and maximize the information to the label ${Y}$ formulated as the optimization:
\begin{equation}
    F_{{G}} = \mathop{\arg\max}\limits_{F\subseteq {G}}{\left[ I({Y}; F)-\beta I({G}; F) \right]}, 
\end{equation}
\noindent where $I(\cdot;\cdot)$ denotes the mutual information and the positive number $\beta$ operate as a tradeoff parameter.
\end{definition}
According to the definition, discriminative subgraphs are the minimal subsets of subgraphs that determine the label of a graph. For example, in the case of a molecular graph, the benzene ring is a discriminative subgraph that distinguishes benzene. Therefore, downstream tasks can benefit from the ability to preserve such discriminative subgraphs.
We provide a theoretical insight below that \model is capable of preserving discriminative subgraphs while reconstructing graphons of downstream graphs.

\begin{theorem}\label{theo:discriminative}
Given an arbitrary graph ${G}$, the oracle graphon ${W}_{{G}}$, the predicted graphon ${\hat{W}}_{G}$, and a discriminative subgraph ${F}_{G}$, the upper bound of the difference between the homomorphism density of $F_{G}$ in the oracle graphon ${W}_{G}$ and that of the predicted graphon ${\hat{W}}_{{G}}$ is decribed as
\begin{equation}
        |t(F_{G}, \hat{W}_{{G}} ) - t(F_{G}, W_{{G}}) | \leq  \frac{\mathrm{e}(F_{G})}{C}  ||R_{C+1}||_{\infty}, 
\end{equation}
\noindent where $\mathrm{e}(F)$ is the number of nodes in subgraph $F$ and $R_{C+1}$ is the remainder in Theorem~\ref{theo:taylor}. 
\end{theorem}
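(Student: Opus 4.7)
The plan is to reduce the homomorphism-density gap to a pointwise gap between the two graphons and then invoke Theorem~\ref{theo:taylor}. First I would write out the homomorphism density in its graphon form
\begin{equation}
t(F_{G}, W) = \int_{[0,1]^{V(F_{G})}} \prod_{(i,j)\in E(F_{G})} W(x_i, x_j)\, \prod_{i\in V(F_{G})} dx_i,
\end{equation}
and the analogous expression with $\hat W_{G}$ in place of $W_{G}$, so that
\begin{equation}
t(F_{G}, W_{G}) - t(F_{G}, \hat W_{G}) = \int_{[0,1]^{V(F_{G})}} \Bigl[\,\prod_{e\in E(F_{G})} W_{G}(x_e) - \prod_{e\in E(F_{G})} \hat W_{G}(x_e)\,\Bigr]\, dx.
\end{equation}

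Next I would apply the standard telescoping identity $\prod_e a_e - \prod_e b_e = \sum_{e}\bigl(\prod_{e'<e} a_{e'}\bigr)(a_e - b_e)\bigl(\prod_{e'>e} b_{e'}\bigr)$ to the integrand. Because both $W_{G}$ and $\hat W_{G}$ are graphons taking values in $[0,1]$, every factor in each telescoped term is at most one, so the absolute value of the integrand is bounded by $\sum_{e\in E(F_{G})} |W_{G}(x_e) - \hat W_{G}(x_e)|$. By Theorem~\ref{theo:taylor}, $W_{G} - \hat W_{G} = R_{C+1}$, hence each pointwise difference is at most $\|R_{C+1}\|_\infty$. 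Integrating over the unit cube (which has measure one) and summing over the edges of $F_{G}$ gives a clean bound proportional to the size of $F_{G}$ times $\|R_{C+1}\|_\infty$.

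The final and subtlest step is to obtain the extra $1/C$ factor appearing in the statement. I would account for it using the structure of the reconstruction: $\hat W_{G}$ is built from a convex combination of $C$ learned bases with $\sum_k \alpha_k = 1$, so the relevant error in each telescoped term is actually an \emph{averaged} residual of the $C$ Taylor contributions rather than the raw pointwise gap; together with the homogeneity of the Taylor remainder in Theorem~\ref{theo:taylor}, one can absorb a factor of $1/C$ into the bound. The main obstacle will be precisely justifying this $1/C$ improvement: it does not come from the telescoping alone, so I would need to revisit the remainder estimate in the proof of Theorem~\ref{theo:taylor} and show that the per-basis residual in the convex-combination approximation carries an explicit $1/C$ normalization, and then to verify that the discriminative-subgraph definition plays no role beyond identifying $F_{G}$ as a fixed subgraph with $\mathrm{e}(F_{G})$ many nodes (so that $\mathrm{e}(F_{G})$ bounds the number of telescoped terms that actually contribute, rather than $|E(F_{G})|$).
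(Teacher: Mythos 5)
There is a genuine gap, and it is exactly where you flagged it: the $1/C$ factor. In the paper's argument the coefficients $\alpha_k$ and the $C$ graphon bases play no role at all in this bound; the proof first invokes the counting lemma for graphons, $|t(F,W)-t(F,\hat W)|\leq \mathrm{e}(F)\,\|W-\hat W\|_{\square}$, identifies $W-\hat W$ with the Taylor remainder $R_{C+1}$ from Theorem~\ref{theo:taylor}, and then \emph{integrates} the Lagrange form of that remainder over $[0,1]^2$. Writing $R_{C+1}$ as a sum of monomials $x^i y^{C+1-i}$ with coefficients $\frac{1}{i!(C+1-i)!}\,\partial^{C+1}W(x_0,y_0)/\partial x^i\partial y^{C+1-i}$, the integration contributes an extra factor $\frac{1}{(i+1)(C+2-i)}\leq \frac{1}{C}$ to each term, and that is the sole source of the $1/C$. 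In other words, the $1/C$ measures the gap between the remainder's integrated (cut-norm/$L^1$-type) size and its sup norm; it has nothing to do with the convex combination $\sum_k\alpha_k=1$, so the mechanism you propose ("per-basis residual carries a $1/C$ normalization") would not materialize, and the step you defer to later cannot be completed along that route.

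Your telescoping reduction is fine as far as it goes (it is essentially the standard proof of the counting lemma), but by bounding the pointwise difference $|W-\hat W|\leq \|R_{C+1}\|_{\infty}$ \emph{before} integrating, you discard precisely the integration over the unit square that yields the $1/C$, and you are left with a bound of the form $|E(F_G)|\cdot\|R_{C+1}\|_{\infty}$, which is weaker than the statement. To repair it within your framework you would keep the integral of $|W-\hat W|$ over the two free variables of the differing edge (an $L^1$-type quantity) and then bound that integral using the explicit polynomial form of $R_{C+1}$, recovering the $\frac{1}{(i+1)(C+2-i)}$ denominators as the paper does. A secondary mismatch: your telescoping naturally produces the number of edges of $F_G$ as the prefactor, whereas the statement (following the counting lemma as quoted in the paper) uses $\mathrm{e}(F_G)$, so the bookkeeping of which count multiplies the bound also needs to be aligned with the lemma actually being invoked.
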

See detailed proof at App \ssym~A.2.
Assuming oracle graphon captures the discriminative subgraphs at $t(F_{G}, W_{{G}})$, \model is optimized to preserve these discriminative subgraphs during fine-tuning.
Moreover, we discuss the generalization bound of \model in App \ssym~A.2.

\section{Experiments}\label{sec:exp}

\begin{table*}[ht!] 
	\centering
    \setlength{\abovecaptionskip}{-0.05cm} 
    \setlength{\belowcaptionskip}{-0.05cm} 
  	\caption{Mean and standard deviation of ROC-AUC(\%) of fine-tuning on the same domain with pre-training dataset.}
\resizebox{2.0\columnwidth}{!}{
	\begin{tabular}
{ c | c  c  c  c  c  c  c  c |c }
\toprule
 &BBBP & Tox21 & Toxcast & SIDER & ClinTox & MUV & HIV & BACE &Avg.Rank \\ 		\hline

Supervised  &   65.84$\pm$4.51   &   74.02$\pm$0.89  &    61.45$\pm$0.64     &   57.29$\pm$1.65   &   58.06$\pm$4.42     &   71.81$\pm$2.56   &   75.31$\pm$1.93 &   70.13$\pm$5.41 &   10.1 \\ \midrule
Vanilla Tuning & 68.99$\pm$3.13  &  75.39$\pm$0.94 & 63.33$\pm$0.74  & 59.71$\pm$1.08  & 65.92$\pm$3.78  & 75.78$\pm$1.73  & \underline{78.21$\pm$0.51} & 79.22$\pm$1.20 & 5.8 \\ \midrule

 StochNorm & {69.27$\pm$1.67} & 74.97$\pm$0.77 & 62.69$\pm$0.69 & 60.35$\pm$0.98 & 65.53$\pm$4.25 & 76.05$\pm$1.61 & {77.58$\pm$0.84} & 81.48$\pm$2.10 & 6.8\\

 Feature Map & 60.42$\pm$0.78 & 70.58$\pm$0.28 & 61.50$\pm$0.24 & \underline{61.66$\pm$0.45} & 64.05$\pm$3.40 & {78.36$\pm$1.11} & 74.50$\pm$0.49 & 76.32$\pm$1.15 & 8.2  \\ 
 
 L2\_SP & 67.70$\pm$3.21 & 73.55$\pm$0.82 & 62.43$\pm$0.34 & 61.08$\pm$0.73 & 68.12$\pm$3.77 & 76.73$\pm$0.92 & {75.74$\pm$1.50} & 82.21$\pm$2.44 & 7.0 \\
 
 DELTA & 67.79$\pm$0.76 & 75.22$\pm$0.54  & 63.34$\pm$0.58 & 61.43$\pm$0.68 & \underline{72.11$\pm$3.06} & \textbf{80.18$\pm$1.13} & {77.49$\pm$0.88} & 81.83$\pm$1.17 & 4.1\\

 BSS & 68.02$\pm$2.76 & {75.01$\pm$0.71} &  {63.11$\pm$0.49} & 59.60$\pm$0.92 & {70.75$\pm$4.98} & 77.92$\pm$2.01 & {77.63$\pm$0.83} & \underline{82.48$\pm$2.10}  & 5.1\\
 GTOT-Tuning &70.04$\pm$1.48  & 75.20$\pm$0.94  & 62.89$\pm$0.46 &  \textbf{63.04$\pm$0.50} & 71.77$\pm$5.38  & \underline{79.82$\pm$1.78} & 78.13$\pm$1.13 & 82.48$\pm$2.18 &\underline{3.2}  \\ 
 
 VGAE-Tuning & \underline{71.69$\pm$0.51} & \underline{75.79$\pm$0.41} & \underline{63.93$\pm$0.38} & 61.33$\pm$0.28 & 68.63$\pm$1.30 & 77.70±1.43 & 77.50$\pm$0.13 & 77.83$\pm$0.64 & 4.6 \\ \midrule
 
\model w/o Pre  &   65.91$\pm$2.54 &   70.09$\pm$0.73  &    59.79$\pm$0.62     &   58.31$\pm$2.67   &   61.04$\pm$2.55      &   73.39$\pm$1.88 &   75.55$\pm$2.06   & 81.63$\pm$0.71 & 9.5 \\

 \model& \textbf{72.59$\pm$0.32} &\textbf{75.80$\pm$0.29}   & \textbf{64.25$\pm$0.27}  & {61.40$\pm$0.71}   & \textbf{74.64$\pm$4.30}  & {75.84$\pm$1.97}   & \textbf{78.33$\pm$0.67}  & \textbf{84.79$\pm$1.39}&\textbf{1.5} \\
\bottomrule
	\end{tabular}
}
\vspace{-0.5cm}
	\label{tab:sota1}
\end{table*}
In this section, we aim to practically evaluate the performance of \model on 15 datasets  under two settings. 
To be specific, we answer the following questions:

\begin{compactitem}
    \item [\textbf{Q1. (Effectiveness)}] Does \model improve the performance of fine-tuning? 
    \item [\textbf{Q2. (Transferability)}] Can \model enable the better transferability than baselines? 
    \item [\textbf{Q3. (Integrity)}] How does each component of \model contribute to the performance? 
    \item [\textbf{Q4. (Efficiency)}] Can \model improve the performance of fine-tuning at an acceptable time consumption?
\end{compactitem}

\vpara{Baselines.} 
There is a large number of GNN pre-training methods, but only a few fine-tuning strategies are available.
Therefore, we implement several representative baselines in computer vision that were originally designed for CNNs, including StochNorm~\cite{kou2020stochastic}, DELTA and the version with fixed attention coefficients (Feature-Map)~\cite{li2018delta}, L2\_SP\cite{xuhong2018explicit} and BSS\cite{chen2019catastrophic}.
To the best of our knowledge, there is only one baseline dedicated to improving the fine-tuning of GNNs, which is agnostic to the pre-training strategy, namely GTOT-Tuning~\cite{ijcai2022p518}.
To validate the effectiveness of reconstructing graphons, we introduce VGAE-Tuning for comparison, which employs VGAE~\cite{kipf2016variational} as the auxiliary loss to reconstruct the adjacency matrices of downstream graphs.
We reproduce the baselines based on the code released by the authors and set the hyperparameters according to their released code and settings depicted in their papers. More details can be found in App \ssym A.4.

\subsection{Fine-tuning GNNs}\label{sec:Q1}
 
\vpara{Setting.} To answer \textbf{Q1}, we evaluate \model on the molecular property prediction task to show its effectiveness. Following the setting of~\cite{hu2020strategies,ijcai2022p518}, we use the model pre-trained by unsupervised context prediction task as the backbone model. Specifically, we pre-train GIN~\cite{xu2018how} by self-supervised Context Prediction task on the ZINC15 dataset with 2 million unlabeled molecules~\cite{sterling2015zinc}. 
Next, we perform fine-tuning of the backbone model on 8 binary classification datasets obtained from MoleculeNet~\cite{wu2018moleculenet}. We use the scaffold split at an 8:1:1 ratio. 
Since our framework is agnostic to the backbone GNNs, we focus on evaluating whether our model achieves better fine-tuning results. For each dataset, we run 5 times and report the average ROC-AUC with the corresponding standard deviation.

\begin{table*}[h!] 
	\centering
    \setlength{\abovecaptionskip}{-0.01cm} 
  	\caption{Mean and standard deviation of Accuracy(\%) of fine-tuning on different domains from pre-training datasets.}
\resizebox{0.8\textwidth}{!}
{
	\begin{tabular}
{   c | c  c  c  c  c  c c |c}
\toprule
 &IMDB-M & IMDB-B & MUTAG & PROTEINS & ENZYMES & MSRC\_21 
 & RDT-M12K &Avg.Rank  \\ 		\hline

Supervised   &   36.67$\pm$6.67   &   52.40$\pm$7.20   &   82.89$\pm$6.16   &   63.51$\pm$3.60   &   20.50$\pm$4.02   &   7.45$\pm$3.52  &  38.09$\pm$0.56 & 10.1 \\ \midrule
Vanilla Tuning & 50.20$\pm$2.72  &  72.10$\pm$3.65 & 83.45$\pm$5.71  & 64.42$\pm$4.91  & 21.33$\pm$5.62  & 7.99$\pm$ 2.39  & 40.53$\pm$1.21 & 6.6 \\ \midrule

StochNorm & 49.87$\pm$3.11  & 72.20$\pm$2.99  &  82.40$\pm$7.95 & 64.08$\pm$3.61  & 23.33$\pm$4.25 & 9.08$\pm$3.21  & 41.02$\pm$0.98 & 6.0    \\   
 
 Feature Map & 50.87$\pm$2.89  & 72.90$\pm$2.70  & 82.95$\pm$7.80  & 63.06$\pm$4.80  & 22.67$\pm$4.78  & 9.77$\pm$4.30   & 40.74$\pm$1.26 & 5.6 \\   
 
 L2\_SP & 51.07$\pm$2.11  & 71.90$\pm$2.59  & 82.98$\pm$3.91  & 65.95$\pm$4.57  & 21.50$\pm$4.50  & 10.29$\pm$3.91   & 38.36$\pm$0.88  & 6.0   \\
 
 DELTA &  50.67$\pm$2.81 & 71.80$\pm$3.99  & 82.98$\pm$6.12  & 63.96$\pm$5.35  & 22.33$\pm$5.01  & \underline{10.48$\pm$3.84}  & 39.87$\pm$1.47 & 6.6  \\

 BSS & 47.35$\pm$1.76  & \underline{73.20$\pm$3.25}  & 84.56$\pm$5.52   & 65.58$\pm$6.79  & \underline{23.41$\pm$4.79}  & 10.47$\pm$3.29   & 39.90$\pm$1.39 &\underline{ 4.7 } \\
 
 GTOT-Tuning & \underline{51.13$\pm$2.72}  & 72.30$\pm$2.93  & \textbf{87.50$\pm$6.94} &  62.89$\pm$3.88 & 20.67$\pm$5.49  & 8.32$\pm$3.92 & 39.90$\pm$0.85 & 6.0    \\ 
 
 VGAE-Tuning & 49.27$\pm$2.37 & 72.50$\pm$2.91 & 80.35$\pm$6.16 & {67.34$\pm$3.52} & 19.33$\pm$7.61 & 8.88$\pm$2.75 & \underline{41.43$\pm$1.88}  & 6.6 \\ \midrule
 \model w/o Pre &   49.27$\pm$3.09   &   72.10$\pm$4.99   &   83.04$\pm$3.09   &   \underline{69.88$\pm$3.30}   &   20.17$\pm$4.37   &   7.63$\pm$2.95 & 40.92$\pm$1.78  & 6.7 \\
\model & \textbf{51.80$\pm$2.31} & \textbf{74.30$\pm$3.29}  &  \underline{86.14$\pm$5.50}  &  \textbf{72.05$\pm$3.80}  & \textbf{26.70$\pm$4.28}  & \textbf{11.01$\pm$2.08} & \textbf{42.80$\pm$1.62} & \textbf{1.1} \\
\bottomrule
	\end{tabular}
	}
\vspace{-0.3cm}
	\label{tab:sota2}
\end{table*}

\vpara{Results.} 
Tab~\ref{tab:sota1} shows that \model achieves 6 best performance among 8 datasets against the baselines, taking a top average rank. We notice that constraining the embedding from pre-trained model like Feature-Map or DELTA sometimes bring worse performance than vanilla tuning. 
From the comparison between supervised learning and \model w/o Pre, although there might be occasional instances of slight performance drops when applying the \model loss to supervised learning, the majority of supervised training experiences benefits from \model loss.
From the comparison between vanilla tuning and the last two rows of the table, the performance of \model without pre-training is lower than that with pre-training but sometime better than the vanilla tuning.
The results generally prove that in the case that datasets come from the same domains, \model can compensate for structural divergence by preserving generative patterns and lead to better performance.

\subsection{Fine-tuning GNNs across domains}\label{sec:Q2}

\vpara{Settings.} To answer \textbf{Q2}, we propose to evaluate \model in the cross-domain setting where the pre-training and downstream datasets are not from the same domains. It is a more challenging yet more realistic setting~\cite{qiu2020gcc,you2020graph} due to the larger structural divergence which can in turn degrade the performance.  
Therefore, we adopt GCC~\cite{qiu2020gcc} as the backbone model and its subgraph discrimination as the pre-train task.
Following the setting of GCC, we pre-train on 7 different datasets ranging from academia to social domains, and evaluate our approach on 7 downstream graph classification benchmarks: IMDB-M, IMDB-B, MUTAG, PROTEINS, ENZYMES, MSRC\_21 and RDT-M12K from the TUDataset~\cite{Morris+2020}. These datasets cover a wide range of domains including movies, chemistry, bioinformatics, vision graphs and social network.
We report the results under 10-fold cross-validation.

\vpara{Results.} From Tab~\ref{tab:sota2}, we find that our model outperforms all baselines on 6 out of 7 datasets and presents competitive results on MUTAG (1.92\% lower than the best). \model improves performance on PROTEINS by 7.63\% and 4.71\% when compared to vanilla tuning and the second best baseline respectively.
We can observe a more substantial improvement of \model compared with the previous experiment (Tab~\ref{tab:sota1}), because we explicitly preserve the generative patterns.
Although GTOT also incorporates structural information, it sometimes even performs worse than vanilla tuning (\ie PROTEINS and ENZYMES).
Generally, \model clearly demonstrates its effectiveness when pre-training and fine-tuning graphs exhibit large structural divergence.

\begin{figure}[h] 
    \centering
    \includegraphics[width=\columnwidth]{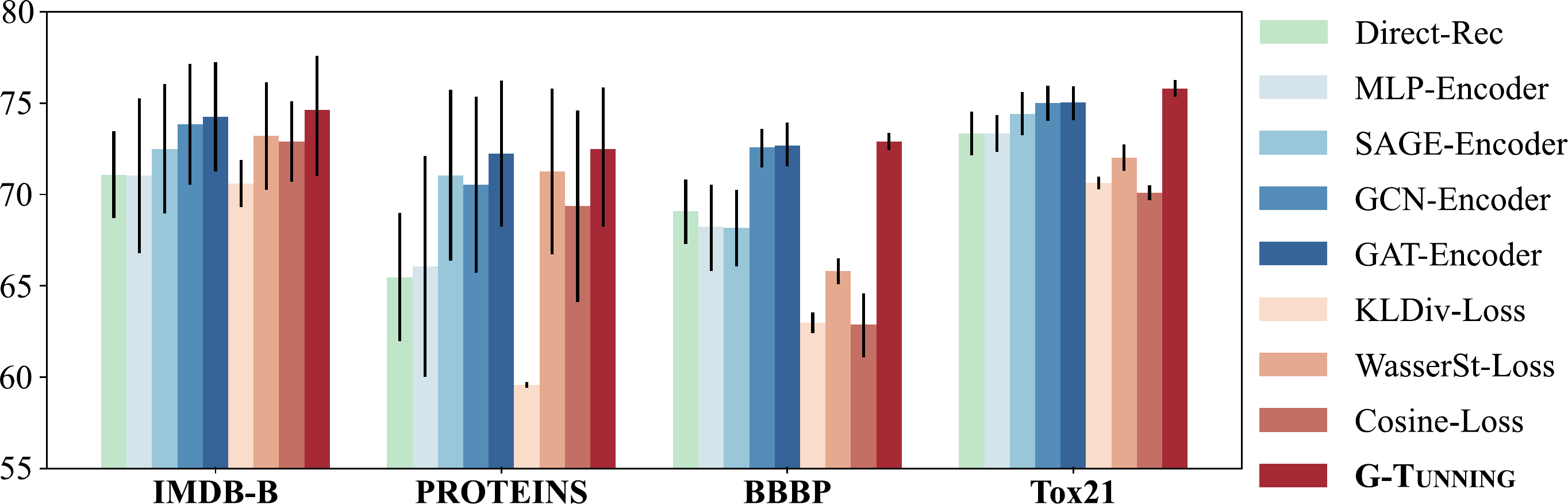}
    \setlength{\abovecaptionskip}{-0.2cm}  
    \setlength{\belowcaptionskip}{-0.5cm} 
    \caption{Ablation study on \model varying architecture and training objective.}
    \label{fig:ablation}
\end{figure}

\subsection{Model Ablation and Hyper-parameter Study}

\begin{figure*}[ht!]
\begin{minipage}{0.30\linewidth}
\centering
\includegraphics[width=1.0\textwidth]{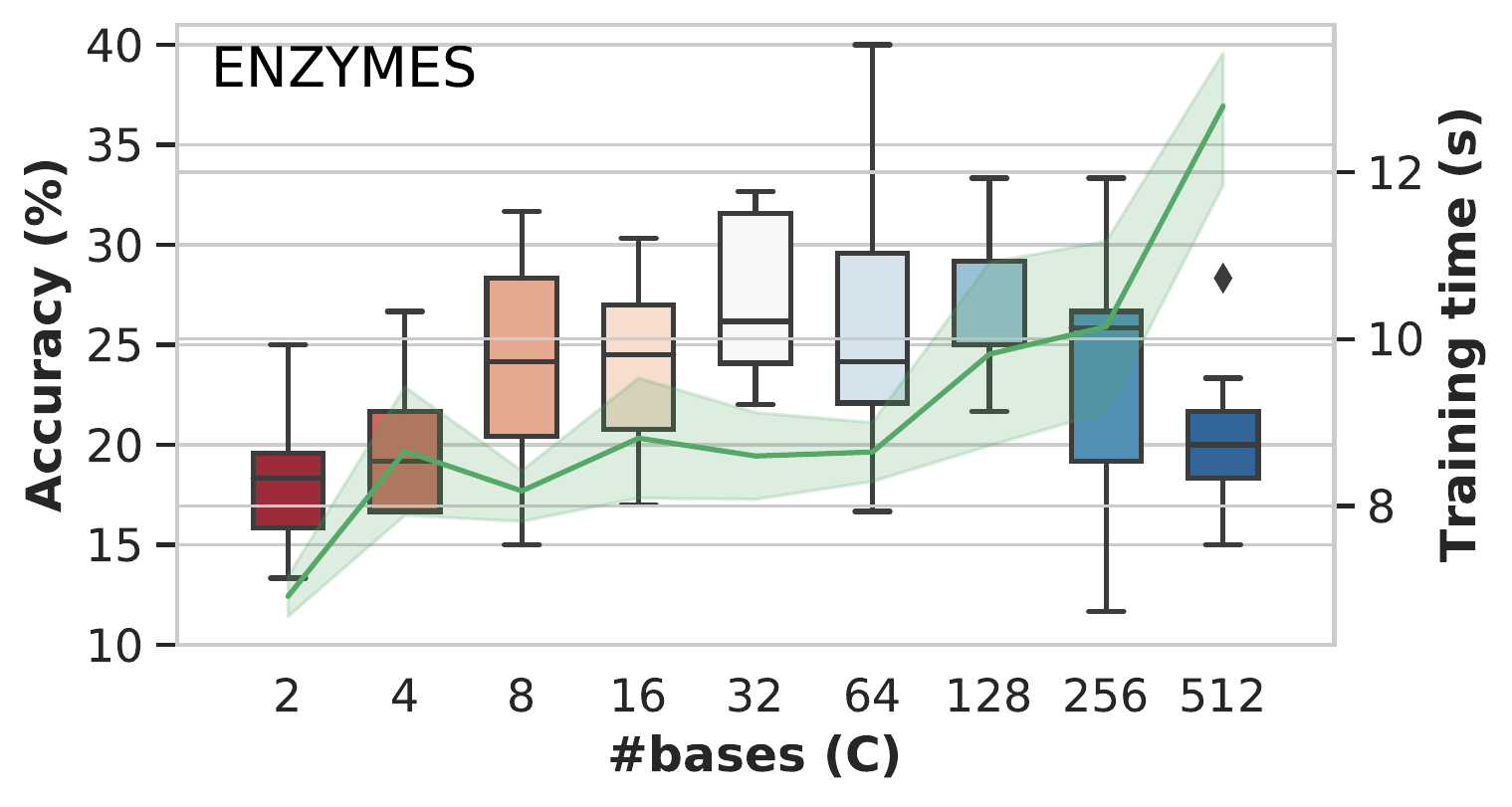}  
\setlength{\abovecaptionskip}{-0.4cm} 
\setlength{\belowcaptionskip}{-0.5cm} 
\captionof{figure}{Performance \& time varying \# graphon bases.}         
\label{fig:sensitivity_b}
\end{minipage}
\hfill
\begin{minipage}{0.69\linewidth}
    \begin{table}[H]
    \centering
     \label{tab:RunningTime}
    \setlength{\abovecaptionskip}{-0.03cm} 
    \caption{The Comparison of Running time.}
\resizebox{1.0\textwidth}{!}
{
	\begin{tabular}
{ c |c c c c c  c  c  c }
\toprule

       &   BBBP    &   Tox21    &   Toxcast   &   SIDER   &  ClinTox  &     MUV      &    HIV     &   BACE    \\ 		\midrule

      \# graphs      &   2039    &    7831    &    8575     &   1427    &   1478    &    93087     &   41127    &   1513    \\ \hline
  Vanilla Tuning   & 1.95$\pm$0.12 & 5.02$\pm$0.26  & 88.96$\pm$3.09  & 2.33$\pm$0.19 & 1.87$\pm$0.07 &  81.56$\pm$3.21  & 15.62$\pm$0.47 & 1.82$\pm$0.08 \\
    L2\_SP        & 2.22$\pm$0.08 & 8.93$\pm$0.82  & 95.62$\pm$4.93  & 2.66$\pm$0.22 & 1.83$\pm$0.13 & 118.64$\pm$11.41 & 86.06$\pm$4.23 & 1.86$\pm$0.04 \\
      DELTA        & 2.04$\pm$0.10 & 6.26$\pm$0.14  & 91.10$\pm$2.93  & 2.39$\pm$0.17 & 1.72$\pm$0.80 & 109.69$\pm$14.22 & 99.13$\pm$0.21 & 1.01$\pm$0.39 \\
   Feature Map     & 1.91$\pm$0.16 & 6.62$\pm$0.27  & 96.35$\pm$5.33  & 3.82$\pm$0.49 & 1.80$\pm$0.23 & 113.62$\pm$11.09 & 31.69$\pm$5.20 & 1.99$\pm$0.20 \\
       BSS         & 2.09$\pm$0.20 & 9.30$\pm$0.41  & 94.07$\pm$2.82  & 2.56$\pm$0.17 & 2.11$\pm$0.17 & 114.84$\pm$12.50 & 72.04$\pm$5.81 & 2.12$\pm$0.06 \\
    StochNorm      & 2.05$\pm$0.13 & 7.82$\pm$0.48  & 96.11$\pm$7.14  & 2.36$\pm$0.19 & 1.70$\pm$0.13 & 120.37$\pm$13.24 & 67.26$\pm$5.83 & 2.05$\pm$0.14 \\
       GTOT        & 1.97$\pm$0.11 & 5.71$\pm$0.18  & 95.87$\pm$3.01  & 2.42$\pm$0.15 & 1.92$\pm$0.08 &  94.62$\pm$3.90  & 59.32$\pm$0.97 & 1.93$\pm$0.12 \\
    VGAE-Tuning     & 5.67$\pm$0.07 & 17.47$\pm$0.30 & 108.86$\pm$0.28 & 5.62$\pm$0.17 & 4.71$\pm$0.10 & 238.98$\pm$15.78 & 95.59$\pm$3.68 & 5.69$\pm$0.09 \\
     \model      & 4.36$\pm$0.19 & 15.53$\pm$0.22 & 102.32$\pm$1.43 & 5.44$\pm$0.23 & 4.06$\pm$0.15 & 198.04$\pm$15.82 & 73.81$\pm$1.35 & 4.96$\pm$0.19 \\ \hline
 Oracle W Est &  55.70\%   &   50.97\%   &   51.68\%    &  72.50\%   &  68.01\%   &    56.65\%    &   63.88\%   &  69.72\%   \\
\bottomrule
	\end{tabular}
	}
    \end{table}
\end{minipage}
\vspace{-0.3cm}
\end{figure*}

\vpara{Model ablation study.} To answer \textbf{Q3}, we choose 2 datasets from above 2 experiments to perform ablation study (Fig~\ref{fig:ablation}). 
To begin with, we examine the effectiveness of proposed graphon decomposition by comparing with reconstructing graphon directly (``Direct-Rec''). 
The results show that the improvement beyond vanilla-tuning is limited, and in some cases, there is even negative transfer. The reason may lie in the difficulty of directly reconstructing the complex semantic information and capturing the properties of graphon from the $A$ and $H$.
Across four datasets, we observe \model always outperforms ``Direct-Rec''.
Next, we compare different GNN architectures (two-layered MLP, GCN\cite{welling2016semi}, GraphSAGE\cite{hamilton2017inductive} and GAT~\cite{2017Graph}) with the default backbone (\ie GIN~\cite{xu2018how}).
In Fig~\ref{fig:ablation}, we observe MLP-encoder performs the worst, which proves the effectiveness of incorporating structural information to reconstruct graphon.  
Lastly, we replace our loss with KL divergence, Wasserstein distance and cosine similarity. 
We can observe that our GW discrepancy loss significantly outperforms the others. 
We argue that cosine similarity may be insensitive to the absolute value which refers to the probability of edges. 
Since KL divergence does not satisfy the commutative law, it is difficult to converge when reconstructing graphons.
Though Wasserstein distance is also based on optimal transport, it fails to capture the geometry between two graphons.

\vpara{Hyper-parameter Study.} Moreover, we study the effect of different graphon bases from 2 to 512.
As analyzed in Theorem~\ref{theo:taylor},  
more bases can represent more information and better approximate the oracle graphon. 
Fig~\ref{fig:sensitivity_b} shows that the performance increases as the number of bases grows from 2 to 32. 
However, when the number keep increasing, the improvement becomes smaller.
We attribute this phenomenon to the optimization difficulty brought by the increased number of parameters.
Besides, as the number of bases increases, the running time of \model grows exponentially (green curves). 
Therefore, \model only requires a small amount of bases to improve the fine-tuning performance.

\subsection{Running Time}
To answer \textbf{Q4}, we conduct a running time comparison (Tab 3).
Firstly, we conducted a complexity analysis in Sec~\ref{sec:Optimization} and presented the running time comparison between our method and baselines in Tab 3. The time complexity mainly consists of two parts: (i) the graphon approximation from the pre-trained model and (ii) the oracle graphon estimation. Secondly, we now report the computing efficiency of our method and baselines on the datasets under the in-domain setting without loss of generality (seconds per training epoch). Please note that \model incurs additional time only during the training process of the fine-tuning stage, while the rest of the time consumption is the same as the vanilla tuning. From Tab 3, we can see that \model is not the slowest tuning method in most cases. It can be observed that our method's time is close to VGAE-tuning. As indicated by the number of graphs in the first row, it is evident that as the number of graphs increases, the time consumption of our method becomes more comparable to that of other baselines. It means our method exhibits excellent scalability. Thus, the time consumption of our method is kept in an acceptable range. Moreover, the last row shows the mean proportion of the oracle graphon estimation during the complete training process in the fine-tuning stage. It is evident that a significant portion of time is dedicated to the oracle graphon estimation. If we exclude this portion of time, our overall time consumption is almost the same as that of the vanilla tuning. If the graphon estimation research advances and can run faster, our method will also benefit from corresponding speed improvements.
Due to the space limit, please refer to other experiments in App \ssym~A.5.

\section{Conclusion}

In this paper, we attribute the unsatisfactory performance of the pre-training on graphs to the structural divergence between pre-training and downstream datasets.
Moreover, we identify the cause of this divergence as the discrepancy of generative patterns between pre-training and downstream graphs. Building upon our theoretical analysis, we propose \model, a GNN fine-tuning strategy based on graphon, to adapt the pre-trained model to the downstream dataset.
Finally, we empirically prove the effectiveness of \model.

\clearpage
\section*{Acknowledgments}
This work is supported by NSFC (No.62322606) and the Fundamental Research Funds for the Central Universities.

\bibliography{aaai24}

\clearpage 
\onecolumn{\appendix
\section{Appendix}

\subsection{Related Work}
In this section, we supplement with some more work about graph pre-training and graphon estimation.

\vpara{Pre-training GNNs.}
With the growing emphasis on graph data in various applications such as finance~\cite{chai2023towards,huang2022dgraph}, neuroscience~\cite{chen2022brainnet} and social network analysis~\cite{yang2022s}, Graph Neural Networks (GNNs) have gradually become one of the hottest neural network models. Additionally, with the rise of large-scale models, the paradigm of pre-training graph foundation model~\cite{liu2023towards} on extensive graph data and fine-tuning them for downstream tasks has once again become a prevailing choice. However, most of the GNNs are not capable of handling different types of graph data. Therefore, some work~\cite{sun2022beyond,mao2023demystifying} is also attempting to break the limitations of the original  and design graph model architectures that can be applied to a wider variety of data types. 

The intuition behind the pre-training on graphs is that there exists underlying transferable patterns between pre-train and downstream graphs~\cite{qiu2020gcc}, such as scale-free~\cite{albert2002statistical}, small world~\cite{watts1998collective} and motif distribution~\cite{milo2004superfamilies}. 
Existing methods mainly design various self-supervised tasks for pre-training GNNs, \emph{e.g.}, context prediction~\cite{hu2020strategies,rong2020self}, edge/attribute generation~\cite{hu2020strategies}, graph autoencoder~\cite{hou2022graphmae} and contrastive learning~\cite{you2020graph,qiu2020gcc}, which are proved to be effective when adapted on similar graphs as pre-train and downstream datasets. 

Generally, most of them simply use vanilla fine-tuning strategy, namely, the pre-trained GNN directly serves as weight initialization for downstream dataset.~\cite{xia2022survey}
However, due to the large structural divergence between pre-train and downstream datasets, the performance is not optimal and even negative transfer may occur~\cite{xia2022survey,hu2020strategies}. 
Thus, we focus on fine-tuning stage to better adapt pre-trained model to downstream graphs rather than designing new pre-training methods. 
We introduces a novel approach, \model, to enhance the generalization of a pre-trained model during the fine-tuning stage, aiming to achieve improved performance on downstream tasks.
Furthermore, \model can be applied to improve the fine-tuning performance of any existing GNN pre-training algorithms.

\vpara{Graphon Estimation.}
Existing research leverages the graphon as a non-parametric method to describe and estimate large networks~\cite{borgs2017graphons}. 
Estimating graphons from graphs is a prerequisite for research based on graphons.
As described in Sec~\ref{sec:prelim}, the weak regularity lemma of graphon~\cite{frieze1999quick} guarentees that an arbitrary graphon can be approximated well by a two-dimensional step function. 
Thus, graphon estimation is essentially to estimate step function.
The existing methods generally align all nodes in multiple graphs or multiple subgraphs based on some measurement, and then estimate the graphon based on the aligned matrices. On the one hand, there are some work learn low-rank matrices to approximate graphons, like matrix completion (MC)~\cite{keshavan2010matrix} and universal singular value thresholding (USVT)~\cite{chatterjee2015matrix}. 
On the other hand, other work base on stochastic block models. A basic method of this kind is the stochastic block approximation (SBA)~\cite{airoldi2013stochastic}.
Largest gap (LG)~\cite{channarond2012classification} improve the SBA to be able to be applied on both large and small scale of graphs. Sorting and smoothing (SAS)~\cite{chan2014consistent} method add a total variation minimization step to sort the graphs.
The above methods can only estimate the graphon base on the graphs come from a single graphon. \citeauthor{xu2021learning} develop a structured Gromov-Wasserstein Barycenters (GWB) based learning method which efficiently estimate the graphon to cope with the graphs from real world. 
These estimation methods can all serve as tools for preserving generative patterns. In the future, if there emerge methods that offer better and faster performance, our approach will also undergo further enhancement and acceleration.


\clearpage

\subsection{Theorems and Proofs}  \label{app:theroems}
\subsubsection{Proof of Theorem~\ref{theo:taylor}} 

In Section~\ref{sec:model}, we provide a fine-tuning framework aiming at adapting the pre-trained GNN to the downstream graphs by preserving the generative patterns of these graphs.

During fine-tuning, a pre-trained GNN $\Phi$ obtains latent representations $H$ for downstream graphs $\mathcal{G}_t = \{G_1,...,G_n\}$ and feeds them into task-specific layers $f_\phi$ to train with the fine-tuning labels $Y$. 
However, the above vanilla strategy may fail to achieve satisfactory performance due to the large discrepancy between pre-training and fine-tuning graphs, and in some instances, it may inadvertently result in \emph{negative transfer}. 
To alleviate this phenomenon, we propose to enable the pre-trained GNN $\Phi$ to preserve the generative patterns of the downstream graphs $\mathcal{G}_t$ by reconstructing their graphons $W$.

To this end, we propose that any graphon $W(x, y)$ can be reconstructed by a linear combination of graphon bases $\B_k \in \boldsymbol{\B}$. 

\vpara{Theorem 1}
$\forall \ W(x, y)\in\mathcal{W}_{C + 1}$,  there exists $C$ graphon bases ${\B_k(x, y)}$ that satisfies $W(x, y) = \sum^{C}_{k = 1}\alpha_k \B_k(x, y) + R_{C+1}(x, y)$, where $\alpha_i\in \mathbb{R} $ and $R_{C+1}(x, y)$ is the remainder of order $C+1$.


\begin{proof}
For an arbitrary graphon $W$, we have its Taylor expansion at the point (0, 0): 

\begin{eqnarray}
\begin{aligned}
W(x,y) =& \;W(0,0) + (x\frac{\partial }{\partial x}+y\frac{\partial }{\partial y})W(0,0) \\
 & + \frac{1}{2!} (x\frac{\partial }{\partial x}+y\frac{\partial }{\partial y})^2W(0,0)+... \\
 & + \frac{1}{C!} (x\frac{\partial }{\partial x}+y\frac{\partial }{\partial y})^C W(0,0) + R_{C+1} \\
 =&\sum^{C}_{k=0} \frac{1}{k!} \sum^{k}_{i=0}  C_k^i \frac{\partial ^k W(0,0) }{\partial x^i \partial y^{k-i}} x^i y^{k-i} +R_{C+1}\\
 =&\sum^{C}_{k=0} \frac{I_k}{k!} \left( \sum^{k}_{i=0}  \frac{C_k^i}{I_k} \frac{\partial ^k W(0,0) }{\partial x^i \partial y^{k-i}} x^i y^{k-i} \right) +R_{C+1}\\
 =&\sum^{C}_{k=0} A_k \B_k(x,y) +R_{C+1},
\end{aligned}
\end{eqnarray}
\noindent where $I_k= \sum^{k}_{i=0}  C_k^i \frac{\partial ^k W(0,0) }{\partial x^i \partial y^{k-i}}, A_k=\frac{I_k}{k!}, \B_k(x,y)=\sum^{k}_{i=0} \frac{C_k^i}{I_k} \frac{\partial ^k W(0,0) }{\partial x^i \partial y^{k-i}} x^i y^{k-i}.$

Since $C_k^i \frac{\partial ^k W(0,0) }{\partial x^i \partial y^{k-i}} \leq I_k$, the coefficient in $\B_k(x,y)$ is in $[0,1]$, the $x^i y^{k-i}$ is in $[0,1]$.
Thus, $\B_k(x,y)$ is a bounded symmetric function from $[0,1]^2$ into $[0,1]$, which preserve the properties of graphon, i.e. $\B_i(x,y)$ can be a graphon and is called a graphon basis. 
Moreover, we can approximate the predicted graphon $\hat{W}(x,y)$ by $\sum^{C}_{k=0} A_k \B_k(x,y)$, which indicating the linear combination of the graphon bases $\B_k(x,y)$.
\end{proof}

\subsubsection{Proof of Theorem~\ref{theo:discriminative}} \label{subapp:discrim}
Here we prove that \model preserves not only the generative patterns of downstream graphs but also discriminative subgraphs (Definition~\ref{def:discriminative}) which the downstream task can benefit from.

To prove the Theorem ~\ref{theo:discriminative}, we first introduce two lemmas. First, we introduce the counting lemma for graphons:
\begin{lemma}[Counting Lemma for Graphons \cite{lovasz2012large}]\label{lemma:Counting Lemma}
Let $F$ be a simple subgraph and $W, W^{\prime} \in \mathcal{W}$ be two graphons. 
Then we have:
\begin{equation}
\begin{aligned}
     |t(F,W) - t(F,W^{\prime}) | \leq \mathrm{e}(F)|| W-W^{\prime} ||_{\square},
\end{aligned}
\end{equation}
\noindent where homomorphism density $t(F,W)$ which mentioned in Preliminaries (Section 3) is used to measure the relative frequency that homomorphism of graph $F$ appears in graph $G$, $\mathrm{e}(F)$ is the number of nodes in graph $F$, and $|| \cdot ||_{\square}$ denotes the cut norm of graphon in~\cite{lovasz2012large}.
\end{lemma}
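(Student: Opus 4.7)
The plan is to prove the bound by a two-stage reduction: first telescope over the edges of $F$ to isolate a single factor of $W - W'$ per term, then use a layer-cake argument to convert each term into a rectangle-cut integral controlled by $\|W - W'\|_\square$.

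For the first stage, I would write the homomorphism densities as integrals over the vertex set,
$$t(F, W) = \int_{[0,1]^{V(F)}} \prod_{ij \in E(F)} W(x_i, x_j) \, d\mathbf{x},$$
and likewise for $W'$. Enumerating the edges $E(F) = \{e_1, \ldots, e_m\}$, I would define hybrid integrals $J_s$ that use $W$ on the first $s$ edges and $W'$ on the remaining $m-s$ edges, so that $J_0 = t(F, W')$, $J_m = t(F, W)$, and
$$t(F, W) - t(F, W') = \sum_{s=1}^{m} \bigl(J_s - J_{s-1}\bigr).$$
Each $J_s - J_{s-1}$ is an integral whose integrand is $(W - W')(x_i, x_j)$ times a product of $[0, 1]$-valued factors $W$ or $W'$ over the remaining edges, where $ij = e_s$.

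For the second stage, I would freeze every coordinate $x_k$ with $k \notin \{i, j\}$ and examine the residual factor as a function of $(x_i, x_j)$. The key observation is that, since the edges of $F$ are distinct unordered pairs, no edge other than $e_s$ equals $ij$, so every residual edge shares at most one vertex with $\{i, j\}$. Once the $x_k$ are fixed, each residual factor therefore depends on at most one of $x_i$ or $x_j$, and the full residual product splits as $c \cdot \phi(x_i) \psi(x_j)$ for some $\phi, \psi : [0, 1] \to [0, 1]$ and some constant $c \in [0, 1]$ depending only on the frozen coordinates. The inner integral thus reduces to a pairing of $W - W'$ against a product $\phi(x_i)\psi(x_j)$.

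The main obstacle is bounding $\int_{[0,1]^2} (W - W')(x, y) \phi(x) \psi(y) \, dx \, dy$ by $\|W - W'\|_\square$, because the cut norm is defined only as a supremum over rectangle-indicator pairings $\mathbf{1}_S(x)\mathbf{1}_T(y)$. I would resolve this by the layer-cake identities $\phi(x) = \int_0^1 \mathbf{1}_{\{\phi > s\}}(x) \, ds$ and $\psi(y) = \int_0^1 \mathbf{1}_{\{\psi > t\}}(y) \, dt$, which rewrite the functional pairing as the average
$$\int_0^1 \int_0^1 \int_{S_s \times T_t} (W - W')(x, y) \, dx\, dy\, ds\, dt,$$
where $S_s = \{\phi > s\}$ and $T_t = \{\psi > t\}$. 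Each inner rectangle integral is bounded in absolute value by $\|W - W'\|_\square$, and the bound survives the averaging over $(s, t) \in [0, 1]^2$. Reintegrating over the frozen coordinates contributes a marginal factor of at most $c \leq 1$, so $|J_s - J_{s-1}| \leq \|W - W'\|_\square$ uniformly in $s$. Summing the $m$ telescoping contributions yields $|t(F, W) - t(F, W')| \leq m \cdot \|W - W'\|_\square$, which is the claimed counting lemma (with $\mathrm{e}(F)$ interpreted as the relevant combinatorial parameter of $F$ appearing in the bound).
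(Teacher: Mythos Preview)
The paper does not actually prove this lemma; it is quoted with a citation to \cite{lovasz2012large} and then used as a black box in the proof of Theorem~\ref{theo:discriminative}. So there is no ``paper's own proof'' to compare against.

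Your argument is the standard one and is correct: telescope over the edges of $F$ to isolate a single $(W-W')$ factor, freeze all but the two endpoint coordinates, observe that the remaining product factorises as $c\,\phi(x_i)\psi(x_j)$ with $c,\phi,\psi$ taking values in $[0,1]$, and then use the layer-cake identity to express the bilinear pairing $\int (W-W')\phi\psi$ as an average of rectangle integrals, each bounded by $\|W-W'\|_\square$. The only cosmetic wobble is the sentence ``Reintegrating over the frozen coordinates contributes a marginal factor of at most $c\le 1$'': the quantity $c$ depends on the frozen coordinates, so it is not a single constant being pulled out. What you actually use is that the inner $(x_i,x_j)$-integral is bounded pointwise (in the frozen variables) by $\|W-W'\|_\square$, and integrating a constant over the unit cube $[0,1]^{|V(F)|-2}$ returns that constant. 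The conclusion $|J_s-J_{s-1}|\le\|W-W'\|_\square$ is correct.

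One substantive remark: your telescoping yields the constant $m=|E(F)|$, the number of \emph{edges} of $F$, which is the form of the Counting Lemma in Lov\'asz's book. The paper's statement says $\mathrm{e}(F)$ is ``the number of nodes in graph $F$'', which is almost certainly a typo; your parenthetical about ``the relevant combinatorial parameter'' is the right way to handle this.
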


Next, we introduce the lemma of cut norm of a graphon:
\begin{lemma}\label{lemma:cut norm}
The cut norm  of a graphon $\|W\|_{\square}$ is defined as 
\begin{equation}
\|W\|_{\square}=\sup _{S, T \subseteq[0,1]}\left|\int_{S \times T} W\right|,
\end{equation}
\noindent where the supremum is taken over all measurable subsets $S$ and $T$.
Obviously, suppose $\gamma \in \mathbb{R}$, we have
\begin{equation}
\|\gamma W\|_{\square}=\sup _{S, T \subseteq[0,1]}\left|\int_{S \times T} \gamma W\right|=\sup _{S, T \subseteq[0,1]}\left|\gamma \int_{S \times T} W\right|=\gamma\|W\|_{\square}.
\end{equation}

\end{lemma}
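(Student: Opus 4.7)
The plan is to verify each equality in the stated chain directly from definitions, since the lemma asserts nothing more than scalar homogeneity of the cut norm. First I would unfold $\|\gamma W\|_\square$ using the definition of the cut norm reproduced in the lemma's own statement, i.e.\ write it as the supremum over measurable $S,T\subseteq[0,1]$ of $\bigl|\int_{S\times T}\gamma W\bigr|$. Second, by linearity of the Lebesgue integral over any measurable rectangle $S\times T$, the scalar $\gamma$ factors out of the integrand, which gives the middle equality $\int_{S\times T}\gamma W=\gamma\int_{S\times T}W$. Third, by the elementary identity $|\gamma x|=|\gamma|\cdot|x|$ valid for every real $x$, the factor $|\gamma|$ can be pulled outside the absolute value. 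Finally, since $|\gamma|$ is a constant independent of the sets $S$ and $T$ over which the supremum is taken, it commutes with the supremum, producing $|\gamma|\sup_{S,T}\bigl|\int_{S\times T}W\bigr|=|\gamma|\,\|W\|_\square$.

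The only subtlety — and the only thing I would flag as an obstacle — is cosmetic: the lemma as written drops the absolute value and states $\|\gamma W\|_\square=\gamma\|W\|_\square$, which is literally correct only for $\gamma\geq 0$. This is however exactly the regime used elsewhere in the paper, because the coefficients $\alpha_k$ produced by the graphon decoder in Eq.~\eqref{eq:Wencoder} are nonnegative and normalized to sum to one, so the lemma is invoked only with nonnegative scalars. I would therefore either restrict the statement to $\gamma\geq 0$ at the outset or prove the fully general $|\gamma|$ version and specialize to the nonnegative case when applying it. Beyond this remark, the proof is a one-line chain of definition, linearity of the integral, multiplicativity of the absolute value, and the constant-factor rule for suprema, and no deeper machinery about graphons is needed.
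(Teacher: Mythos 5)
Your proof is correct and follows essentially the same route as the paper's own argument: unfold the definition, pull the scalar out of the integral by linearity, and pull the resulting constant out of the supremum. Your added remark that the final equality should read $|\gamma|\,\|W\|_{\square}$ for general real $\gamma$ (the paper's statement with $\gamma\in\mathbb{R}$ is literally valid only for $\gamma\geq 0$, which is the only case the paper actually uses) is a legitimate and worthwhile correction rather than a gap.
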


We now present the theorem that asserts the capability of \model to retain the discriminative subgraph in downstream graphs.

\vpara{Theorem 2}
Given an arbitrary graph $G$, its oracle graphon ${W}$, its predicted graphon ${\hat{W}}$, and a discriminative subgraph ${F}$ in $G$, the difference between the homomorphism density of $F$ in the oracle graphon ${W}$ and that in the predicted graphon ${\hat{W}}$ is upper bounded by 
\begin{equation}
\begin{aligned}
        |t(F, \hat{W} ) - t(F, W) | &\leq  \frac{\mathrm{e}(F)}{C}  ||R_{C+1}||_{\infty}, 
\end{aligned}
\end{equation}


\noindent where $R_{C+1}$ is the remainder in Theorem~4.1 in the main paper. 


\begin{proof}
Applying Lemma~\ref{lemma:Counting Lemma}, we only need to prove the following inequality
\begin{equation}
\begin{aligned}
        \mathrm{e}(F)|| W-W^{\prime} ||_{\square} &\leq  \frac{\mathrm{e}(F)}{C}  ||R_{C+1}||_{\infty}.
\end{aligned}
\end{equation}
Moreover, according to the Theorem~4.1 in the main paper, we have $\mathrm{e}(F)||W-W^{\prime}||_{\square}=\mathrm{e}(F)||R_{C+1}||_{\square}$. 

Then, applying Lemma~\ref{lemma:cut norm}, the difference between the homomorphism density of $F$ in the oracle graphon ${W}$ and that in the predicted graphon ${\hat{W}}$ can be written as 


\begin{eqnarray}
\begin{aligned}
|t(F, \hat{W} ) - t(F, W) | &\leq \mathrm{e}(F)|| W-W^{\prime} ||_{\square} \\
    &=\mathrm{e}(F)||R_{C+1}||_{\square} \\
    &=\mathrm{e}(F) \Bigl| \int_{[0,1]\times[0,1]}R_{C+1} \rm{d}x\rm{d}y \Bigr|\\
    &=\mathrm{e}(F) \Bigl| \int_{[0,1]\times[0,1]} \frac{1}{(C+1)!}  (x\frac{\partial }{\partial x}+y\frac{\partial }{\partial y})^{C+1} W(x_0, y_0) \rm{d}x\rm{d}y \Bigr|\\
    &=\mathrm{e}(F) \Bigl| \int_{[0,1]\times[0,1]} \sum_{i=0}^{C+1} \frac{1}{i! (C+1-i)!} \frac{\partial W(x_0,y_0)}{\partial x^i \partial y^{C+1-i}} x^i y^{C+1-i} \rm{d}x\rm{d}y \Bigr|\\
    &=\mathrm{e}(F) \sum_{i=0}^{C+1} \frac{1}{(i+1)! (C+2-i)!} \frac{\partial W(x_0,y_0)}{\partial x^i \partial y^{C+1-i}}\\
    &\leq \frac{\mathrm{e}(F)}{C} \sum_{i=0}^{C+1} \frac{1}{i! (C+1-i)!} \frac{\partial W(x_0,y_0)}{\partial x^i \partial y^{C+1-i}}\\
    &\leq \frac{\mathrm{e}(F)}{C} R_{C+1}(1,1)\\
    &\leq \frac{\mathrm{e}(F)}{C} ||R_{C+1}||_{\infty},
\end{aligned}
\end{eqnarray}
\noindent where $||R_{C+1}||_{\infty}$ is $\sup _{x,y} R_{C+1}(x,y)$.
\end{proof}

\subsubsection{Generalization Bound of \model} \label{subapp:bound}
We analyse the generalization bound of \model and expect to find the key factors that affect its generalizability.
As a preliminary, we first introduce the uniform stability.
\begin{definition}[Uniform Stability]  \label{def:uniform}
Let $ S=\{z \mid z_i = (G_i,y_i), i= 1,...,N \} $ denote a training set with $N$ graphs, and $S^{i}$ denote the training set replacing graph $i$ with graph $i'$. A function $f$ has uniform stability $\beta$ with respect to the loss function $l$ if it satisfies:
\begin{equation}
    \forall S \in \mathcal{Z}, \quad \forall i\in \{ 1,...,N \}, \quad \mid l(f_S,z)-l(f_{S^{i}},z) \mid \leq \beta 
\end{equation}
\end{definition}

Then, we introduce the lemma of generalization bound. 
\begin{lemma}[Generalization Bound\cite{bousquet2002stability}] \label{lemma:gen bound}
Let $f$ denote a learning algorithm with a loss function $l$, and $\beta$ denote its uniform stability. 
For $\forall z \in \mathcal{Z} $ and  $\forall \epsilon \textgreater 0$, with $0 \leq l(f_S, z) \leq U$, the function $f_S$ trained over $S$ with $N$ graphs has following generalization bound:
\begin{equation}
        \mathbb{P}_S \bigl[ R(f_S) -  R_N(f_S) \textgreater \epsilon + 2\beta \bigr] \leq \exp{\bigl( -\frac{2N \epsilon^2}{(4N\beta +U)^2} \bigr) }, 
\end{equation}
\noindent where $\beta$ is the uniform stability in Definition~\ref{def:uniform}, $R(f_S)$ denotes the generalization error $E_z[l(f_S,z)]$ of $f$ on the data space $S$ and $R_N(f_S)$ denotes the empirical error $\frac{1}{N}\sum_{i=1}^N l(f_S,z_i)$.
\end{lemma}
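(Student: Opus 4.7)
The plan is to establish concentration of $\Phi(S) := R(f_S) - R_N(f_S)$ via McDiarmid's bounded-differences inequality, treating $\Phi$ as a function of the $N$ i.i.d.\ training samples comprising $S$. This splits the task into three pieces: bounding the expectation $\mathbb{E}_S[\Phi(S)]$, bounding the per-coordinate differences $|\Phi(S) - \Phi(S^i)|$, and converting the resulting sub-Gaussian concentration into the stated tail bound by absorbing the bias into the threshold $\epsilon + 2\beta$.

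For the expectation, I would exploit exchangeability together with uniform stability. By symmetry, $\mathbb{E}_S[R_N(f_S)] = \mathbb{E}_S[l(f_S, z_1)]$; since $z_1$ is independent of $S^1$ (the set where $z_1$ is replaced by an independent copy $z'_1$), a dummy-index renaming gives $\mathbb{E}[l(f_{S^1}, z_1)] = \mathbb{E}_S[R(f_S)]$. Applying the pointwise stability estimate $|l(f_S, z_1) - l(f_{S^1}, z_1)| \le \beta$ and taking expectations then yields $|\mathbb{E}_S[\Phi(S)]| \le \beta$. For the bounded differences, resampling a single $z_i$ changes $R(f_S)$ by at most $\beta$ (stability applied inside the expectation over a fresh test point), and changes $R_N(f_S)$ by at most $\beta + U/N$: each of the $N-1$ undisturbed summands moves by at most $\beta$ via stability, while the single affected summand must be controlled by the range bound $U$ because it is evaluated at different sample points under $S$ versus $S^i$. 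Combining yields $|\Phi(S) - \Phi(S^i)| \le 2\beta + U/N$, and McDiarmid's inequality then produces a sub-Gaussian tail of the form $\exp\!\bigl(-2\epsilon^2 / (N(2\beta + U/N)^2)\bigr)$; folding $\mathbb{E}_S[\Phi(S)] \le \beta$ into the threshold and clearing the denominator gives the claimed bound.

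The main obstacle is the bookkeeping that converts the pointwise stability hypothesis into the mixed constant $\beta + U/N$ controlling the empirical-risk perturbation. The delicate point is that stability alone cannot bound the one summand in which both the learned predictor and the evaluation argument have been perturbed simultaneously; that term must be controlled by the range $U$ instead, and it is precisely this summand that produces the additive $U$ inside the denominator of the exponent. A secondary subtlety lies in matching the exact numerical constants of the statement ($4N\beta + U$ rather than $2N\beta + U$, and $2\beta$ rather than $\beta$ in the threshold), which depends on whether uniform stability is defined via a replace-one or a leave-one-out perturbation. Either convention yields the same sub-Gaussian rate up to absolute constants, and the looser constants in the lemma simply accommodate the worst case between the two conventions.
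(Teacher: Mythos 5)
Your proposal is correct and follows essentially the same route as the source this lemma is imported from: the paper itself gives no proof, citing Bousquet and Elisseeff directly, whose argument is exactly your McDiarmid decomposition of $R(f_S)-R_N(f_S)$ with the bias and bounded-difference terms controlled by uniform stability and the range $U$. You also correctly diagnose the constant mismatch: the stated constants ($2\beta$ and $4N\beta+U$) come from the removal-based stability convention of the cited work, while the paper's replace-one Definition yields your sharper constants ($\beta$ and $2N\beta+U$), from which the stated bound follows a fortiori.
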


Following the above lemma, we obtain the generalization error bound of \model as follows.
\begin{theorem}[Generalization Bound for \model]
\label{pro:generalization_bound}
Assume that \model satisfies $0\leq l(f_S,z)\leq U$ and cross entropy loss $0\leq \mathcal{L}_\text{CE}(f_S,z)\leq P$. For $\forall \epsilon \geq 0$, the following inequality holds over the random draw of the sample $S$ with $N$ graphs:
\begin{equation}
    \mathbb{P}_S \bigl[ R(f_S) -  R_N(f_S) \textgreater \epsilon + 2P+4\lambda \bigr] \leq \exp{\bigl( -\frac{2N \epsilon^2}{(4N P+8N\lambda +U)^2} \bigr) } 
\end{equation}
\noindent where $\lambda$ denotes the hyper-parameter of \model loss.
\end{theorem}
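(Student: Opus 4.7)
The plan is to invoke Lemma (Generalization Bound) from Bousquet and Elisseeff by identifying an appropriate uniform-stability constant $\beta$ for the combined loss
\[
l(f_S, z) \;=\; \mathcal{L}_\text{CE}(f_S, z) \;+\; \lambda\,\mathcal{L}_\model(W, \hat{W}),
\]
and then matching the resulting bound to the claimed exponential tail. Comparing the target $\epsilon + 2P + 4\lambda$ against the $\epsilon + 2\beta$ appearing in the lemma immediately suggests aiming for $\beta = P + 2\lambda$, and the denominator $4NP + 8N\lambda + U = 4N\beta + U$ confirms that this is indeed the right target.

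First I would bound the sample-replacement sensitivity of each summand separately. For the cross-entropy term, the assumption $0 \leq \mathcal{L}_\text{CE}(f_S, z) \leq P$ gives $|\mathcal{L}_\text{CE}(f_S, z) - \mathcal{L}_\text{CE}(f_{S^{i}}, z)| \leq P$ directly. For the graphon reconstruction term, the key observation is that both the oracle graphon $W$ and the reconstructed $\hat{W}$ have all entries in $[0,1]$, so each summand $(W_{i,k} - \hat{W}_{j,l})^p$ of the $p$-order Gromov–Wasserstein loss in Eq.~\ref{Eq:loss} lies in $[0,1]$, and since the transport plan has total mass one, $\mathcal{L}_\model(W, \hat{W}) \in [0,1]$. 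Consequently $|\mathcal{L}_\model(W_S, \hat{W}_S) - \mathcal{L}_\model(W_{S^i}, \hat{W}_{S^i})| \leq 2$, so the $\lambda$-weighted contribution is at most $2\lambda$. Adding these two bounds yields
\[
|l(f_S, z) - l(f_{S^{i}}, z)| \;\leq\; P + 2\lambda \;=:\; \beta,
\]
which is the uniform stability required by Definition~\ref{def:uniform}.

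Next I would substitute $\beta = P + 2\lambda$ directly into Lemma~\ref{lemma:gen bound} with the standing upper bound $U$ on the total loss. The additive correction $2\beta$ becomes exactly $2P + 4\lambda$, and the denominator $(4N\beta + U)^2$ becomes $(4NP + 8N\lambda + U)^2$, yielding the stated inequality. This is essentially a plug-in step once the stability constant is established.

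The main obstacle is justifying the stability bound on the graphon reconstruction term, since $\mathcal{L}_\model$ is itself defined through a $\min$ over transport plans and depends on the training set indirectly through both the pre-trained encoder parameters and the oracle graphon estimated from the batch. The clean way around this is to exploit only the pointwise range $\mathcal{L}_\model \in [0,1]$ inherited from the $[0,1]$-valued graphon entries and the normalization of the transport plan, rather than trying to trace the sensitivity through the optimization; a more refined bound would need to analyse the continuity of the SGB procedure in Eq.~\ref{eq:prox} and of the graphon decoder $\Psi$, which is unnecessary here because the coarse range-based argument already produces the exact constants in the statement.
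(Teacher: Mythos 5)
Your proposal matches the paper's own argument: both establish uniform stability $\beta = P + 2\lambda$ by bounding the cross-entropy difference by $P$ and the $\lambda$-weighted GW-reconstruction difference by $2\lambda$ via the $[0,1]$ range of the graphon entries together with the normalization of the transport plan, and then substitute $\beta$ into the Bousquet--Elisseeff lemma. The only cosmetic difference is that you bound the GW term's sensitivity by its pointwise range directly, while the paper expands the difference of the two transport sums before applying the same range bound, so the proofs are essentially identical.
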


\begin{proof}
We derive the uniform stability of our \model and substitute the $\beta$ in Lemma~\ref{lemma:gen bound} to prove the above theorem. 

$l(f_S, z)$ is the overall loss, which contains $\mathcal{L}_\text{CE}$ and $\mathcal{L}_\model$ shown in Equation~4 in the main paper. Specifically, we extend $\mathcal{L}_\model$ according to Equation~6 in the main paper.

For any training set $S$ and $S^{\prime}$ as any sample $(G,y)\in \mathcal{Z}$ in $S$ replaced, $\hat{W}^S, \hat{W}^{S^{\prime}}$ respectively denote the predicted graphon predicted using $S, S^{\prime}$, then our uniform stability is formulated as:

\begin{eqnarray}
\begin{aligned}
    &\;\;\;\;|l(f_S,z)-l(f_{S^{\prime}},z)| \\
    &= \bigl| \mathcal{L}_\text{CE}(f_{S}(G),y) - \mathcal{L}_\text{CE}(f_{S^{\prime}}(G),y) + \\
    & \lambda \sum_{i,j,k,l}(W^{S}_{i,k}-\hat{W}^{S}_{j,l})^p T^{S}_{i,j} T^{S}_{k,l}-\lambda \sum_{i,j,k,l}(W^{S^{\prime}}_{i,k}-\hat{W}^{S^{\prime}}_{j,l})^p T^{S^{\prime}}_{i,j} T^{S^{\prime}}_{k,l} \bigr|   \\
    & \leq P +2\lambda  \sum_{i,j,k,l} | (\hat{W}^{S}_{i,k}-{W}_{j,l})^p-(\hat{W}^{S^{\prime}}_{i,k}-{W}_{j,l})^p | T_{i,j}T_{k,l} \\
    & \leq P +2\lambda  \sum_{i,j,k,l}1 \cdot T_{i,j}T_{k,l} = P +2\lambda
\end{aligned}    
\end{eqnarray}
Then we substitute the $\beta = P + 2\lambda$ in Lemma~\ref{lemma:gen bound} and complete this proof.
\end{proof}

\subsubsection{Notations}

Here we collect most of the notations in Section~\ref{sec:model}:

\begin{table}[h]
\centering
\begin{tabular}{|c|c|}
\hline
Symbol & Definition \\
\hline
$G$ & A graph consists of structure $A$ with nodes $V$ and edges $E$, \\
& node attribute matrix $X$ and graph label $Y$. \\
$d$   &  The hidden dimention of node attribute matrix $X$.   \\ 
\hline
${\Phi}$ & The pre-training backbone. \\
\hline
$f_{\phi}$ & The shallow downstream adaption layer.\\
\hline
$H$ & The node embedding matrix from pre-trained model $f_{\Phi}$. \\
\hline
${\Omega}$ & The process of graphon reconstruction. \\
\hline
$\Psi$   & The \Wencoder.         \\ \hline
$\hat{W} \& W$ & The predicted graphon and the oracle graphon respectively. \\
\hline
$\alpha$   &   The coefficients        \\ \hline
$\B$ & The graphon bases. \\ \hline
$D$ & The size of oracle graphon. \\ \hline
$\lambda$ & The tradeoff parameter. \\ \hline
$M$ & The size of predicted graphon. \\ \hline
$C$   &  The number of bases $\B$.         \\ \hline
$T$   & Transportation matrix of our $p$-order GW distance.         \\ \hline
$I(\cdot;\cdot)$   &     The mutual information.      \\ \hline
\end{tabular}
\caption{Notations in Section~\ref{sec:model}}
\label{tab:notation}
\end{table}

\clearpage

\subsection{Algorithm} \label{app:algo}
As described Section~\ref{sec:model}, we introduce a novel fine-tuning strategy. Here we try to provide the algorithms in details.
\subsubsection{Oracle Graphon Estimation} \label{subapp:oracle}
In Section 4.2, we propose that we need the oracle graphon of the downstream graphs to adapt the pre-trained GNN to the generative patterns of the downstream dataset.
Thus, we need to estimate the oracle graphon $W$ by downstream graphs $\mathcal{G}_t$. 

\begin{algorithm}[ht]
\caption{Estimating Oracle Graphon} 
\label{alg1}
\begin{flushleft}
  \textbf{Input:} Adjacency matrix $A$. The weight of proximal term $\beta$, the number of iterations $L$, the number of inner Sinkhorn iterations $S$. The size of Oracle Graphon $D$. \\
  Initialize $W\sim \text{Uniform}([0,1])$. \\
  Initialize $\mu=\frac{A}{\|A\|_1}$ 
 and $\mu_W=\text{interp1d}_{D}(\text{sort}(\mu))$ 
 , where $\text{interp1d}_{D}(\cdot)$ samples $D$ values from the input vector via linear interpolation. \\
 \textbf{Output:} The estimated oracle graphon $W$.
\end{flushleft}
\begin{algorithmic}[1]
     \FOR{$l=1,...,L$:}
       \STATE Initialize $T^{(0)}=\mu \mu_W^{\top}$ and $a=\mu$. \\
      \FOR{$s=0,...,S-1$:} 
      \STATE $\widetilde{D} =(A\odot A)\mu\bm{1}_{D} + \bm{1}_{N}\mu_W^{\top}(W\odot W)$; \\
      $Q=\exp(-\frac{1}{\beta}(\widetilde{D} - 2AT^{(s)}W^{\top}))\odot T^{(s)}$;  \\
      $b=\frac{\mu_{W}}{Q^{\top}a}$, \quad $a = \frac{\mu}{Q b}$, \quad $T^{(s+1)}=\text{diag}(a){Q}\text{diag}(b)$. 
      \ENDFOR
       \STATE Update $W$ via Equation~\ref{eq:prox} in the main paper.  
       \ENDFOR
\end{algorithmic}
\end{algorithm}

\subsubsection{\model Training} \label{subapp:gtuning}
In Section 4.3, we propose to approximate the graphon is to learn a mapping function graph structure $A$ and node embedding $H$ to the target $W$.
Building upon theoretical derivation of Theorem~\ref{theo:taylor}, we have developed a noval graphon reconstruction method based on the innate characteristics of the graphon.

\begin{algorithm}[h]
    \caption{Pseudo Code for Training \model}	
    \label{alg:model}
    \begin{flushleft}
    \textbf{Input:} The pre-trained GNN $\Phi$, the shallow layer $f_{\phi}$, the \Wencoder $\Psi$, downstream graph structure and features $\{A_i, X_i\}$, the estimated oracle graphon $W$. \\
    \textbf{Output:} The fine-tuned GNN $\Phi$.
    \end{flushleft}
\begin{algorithmic}[1]
    \WHILE{$\mathcal{L}$ not converges}
    \STATE Use a pre-trained GNN $\Phi$ to obtain latent representations $H$: \quad $H = \Phi(A, X)$;
    \STATE Calculate the task-specific loss $\mathcal{L}_\text{task}$ with labels $Y$ and $H$: \quad $\mathcal{L}_\text{task} = \mathcal{L}_\text{CE}(f_{\phi}(H),{Y})$;
    \STATE Encode H and A into coefficients $\alpha = \{\alpha_1,...,\alpha_C\}$: \quad $\alpha = \Psi(A, H)$;
    \STATE Randomly select downstream graphs $G_k$, \\ $\mu_A^k=\text{sort}(G_k), \quad \boldsymbol{\B}_k = \sigma(\mu_A^k)$, \quad where $\sigma$ is the Sigmoid function;
    \STATE Compute approximated graphon: \\ $\hat{\alpha_k} = \tau( W^T [ \alpha_k || \boldsymbol{0}]), \quad \hat{\boldsymbol{\B}_k} = \text{interp1d}_{M}(\boldsymbol{\B}_k), \quad \hat{W} = \sum^{C}_{k=0} \boldsymbol{\hat{\alpha}}_k \hat{\boldsymbol{\B}_k}$, \\ where $\text{interp1d}_{M}(\cdot)$ samples $M$ values from the input vector via linear interpolation.
    \STATE Compute batch loss: \\ $\mathcal{L} = \mathcal{L}_{\text{task}} + \lambda \mathcal{L}_{\text{G-TUNNING}}(W, \hat{W}), \quad \theta_{\Psi} \xleftarrow{+} - \nabla_{\Psi} \mathcal{L}, \quad  \theta_{f_{\Phi}} \xleftarrow{+} - \nabla_{f_{\Phi}}\mathcal{L}$
    \ENDWHILE
\end{algorithmic}
\end{algorithm}

\clearpage

\subsection{Details of Experiments} \label{app:settings}
\vpara{Experiments Settings}
In Sec~\ref{sec:exp}, to separately validate the effectiveness (Q1) and transferability (Q2) of our framework, we employ two settings: The first setting (Sec~\ref{sec:Q1}) uses graph data from the same domain to pre-train and fine-tune. In this scenario, there is a certain gap between the pre-training and downstream graph structures, but the node features are in the same space. Therefore, we can directly use the original node features of the graphs. The second setting (Sec~\ref{sec:Q2}) involves pre-training and fine-tuning graphs from different domains. Here, the gap in graph structures is more significant, and the spaces of the node features are entirely different. Consequently, a common approach, GCC~\cite{qiu2020gcc}, is to completely discard the original node features.
Specifically, GCC focuses purely on structural representations without input features, because transfer learning between graph with different features has not been realized and lacks a suitable intuitive explanation. 
Since most GNNs require node features as input, GCC proposes leveraging the graph structure to initialize the node features.

Therefore, the performance reported in Tab 2 is produced using the structural node features generated in the same way as GCC rather than using the original node features.
Since we use the different features from ~\cite{Errica2020A}, the performance is naturally different from the one shown in~\cite{Errica2020A}. 

It is worth noting that our method can be leveraged to ``supervised'' setting with any other node features. 
As pointed by the reviewer, we would like to provide the results of applying \model on features used in~\cite{Errica2020A}.
Specifically, we supplement an experiment using GIN with \model without pre-training on ENZYMES and PROTEINS using their original node features.
The performance shown in Tab~\ref{tab:my_label} indicates that our method can improve the downstream task even without pre-training, which is consistent with the conclusion in the paper.

\subsubsection{Hyperparemeter Strategy}
We use Cross-entropy loss and MSE loss for graph classification and graph regression respectively.
For training process, we use Adam optimizer with early stopping at 20 epochs to train \model. 
Moreover, other hyper-parameters are searched using random search strategy and the range of hyper-parameters are listed in Tab~\ref{tab:para}, where $C$ is the number of graphon bases, $K$ is the size of oracle graphon and $M$ is the size of learnable graphon bases.

\begin{table}[h]

\begin{minipage}{0.43\linewidth}
    \centering
\resizebox{1.0\linewidth}{!}{
    \begin{tabular}{c|c|c} \toprule
         &  PROTEINS  & ENZYMES   \\ \midrule
    GIN     &  73.3±4.0 & 59.6±4.5 \\
    GIN with \model  &  \textbf{74.9±3.1} & \textbf{66.0±6.2}  \\ \bottomrule
    \end{tabular}
    }
    \caption{Mean and standard deviation of Accuracy(\%) comparison of GIN with and without \model on both datasets with their original node features. }
    \label{tab:my_label}
\end{minipage}
\hfill
\begin{minipage}{0.55\linewidth} 
\centering
\resizebox{1.0 \linewidth}{!}{
\begin{tabular}{l|c}
\toprule
        Hyper-parameter        & Range               \\ 
\midrule
     $\lambda$  & \{5e-2 $\to$ 6\}                  \\
     $C$        & \{2,4,8,16,32,64,128,256,512\}    \\
     $K$        & \{20,50,100,150,200,400,600,800,1000\}       \\
     $M$        & \{10,20,30,40,50,60,70,80,90,100\} \\
     Learning Rate  & \{1e-3 $\to$ 5e-1\}  \\
     Weight decay   & \{1e-2,1e-3,1e-4,5e-4,1e-5,1e-6,1e-7,0\}  \\
     Dropout rate   & \{0 $\to$ 0.8\}   \\
     Batch size     & \{16,32,64,128\}  \\
    \midrule
     Optimizer  & Adam  \\
     Epoch      & 100 (30 for ENZYMES and MSRC\_21) \\
     Early stopping patience & 20 \\
     GPU        & GeForce RTX 3090 \\
  
\bottomrule                     
\end{tabular}                           
}
\caption{Hyper-parameter search range for \model. }
\label{tab:para}
\end{minipage}

\end{table}



\subsubsection{Datasets} \label{app:data}
In this Section, we introduce the statistics of the datasets used in both observations and main experiments.

\vpara{Datasets in observations.} 
Here we introduce the datasets mentioned in Fig~\ref{fig:intro} of Section~\ref{sec:intro}. As shown in Tab~\ref{tab:obsdataset}, the scale of data used for pre-training and fine-tuning varies, and the data sources are diverse, hence leading to a generalized conclusion: the subpar performance of fine-tuning is due to the disparity in the generative patterns between the pre-training and fine-tuning graphs.

\vpara{Datasets in experiments.} The statistics of downstream datasets are summarized in Tab~8. The bold numbers suggest that our model is applicable on both small and large scale of dataset.

\begin{table}[h]

\begin{minipage}{0.55\linewidth}
    \centering
\captionof{table}{The statistics of pre-training and downstream datasets in observations of Fig~\ref{fig:intro}.}
\label{tab:obsdataset}
\resizebox{1.0\linewidth}{!}{
 {
    \setlength\tabcolsep{2pt} 
    \begin{threeparttable}
    \begin{tabular}{ccccccccccc}
     \toprule
        &  \multicolumn{1}{c}{Name}  &  \multicolumn{1}{c}{$|{V}|$}  &  \multicolumn{1}{c}{$|{E}|$}   \\
     \midrule
     \multirow{6}*{\setlength\tabcolsep{1pt}\rotatebox{90}{\textbf{Pre-training}}} 
     &  Academia~\cite{qiu2020gcc} &  137,969  &  739,384   \\
     &  DBLP(NetRep)~\cite{qiu2020gcc} & 540,686  & 30,491,458  \\
     & IMDB~\cite{qiu2020gcc} &  896,305  &  3,782,447 \\
     &  Michigan~\cite{TRAUD20124165}  & 30,106 & 1,176,489 \\
     &  UIllinios~\cite{TRAUD20124165} &  30,795 & 1,264,421  \\
     &  MSU~\cite{TRAUD20124165} &  32,361 & 1,118,767  \\      
     &  Wiki-Vote~\cite{snapnets} & 	7,115 & 103,689  \\
     \midrule
     \multirow{8}*{\rotatebox{90}{\textbf{Downstream}}} 
     & Pubmed~\cite{yang2016revisiting} & 19,717 & 44,324\\
     & Cora~\cite{yang2016revisiting} & 2,485  & 5,069   \\ 
     & Wisconsin~\cite{2020Geom} & 251 &  466 \\
     & Texas~\cite{2020Geom} & 183 & 1074 \\
     & Cornell~\cite{2020Geom} & 183 & 280 \\
     & DD68~\cite{nr15aaai} & 775 & 2,093 \\
     & DD687~\cite{nr15aaai} & 725 & 2,600 \\
     & DD242~\cite{nr15aaai} & 1,284 & 3,303 \\
    \bottomrule
    \end{tabular}
    \end{threeparttable}
}
}
\end{minipage}%
 \hfill
\begin{minipage}{0.43\linewidth}
    \centering
\label{tab:statistic}
\captionof{table}{The statistics of the downstream datasets in Section~\ref{sec:exp}.}
        \resizebox{1.0\linewidth}{!}{
    \begin{tabular} 
        {l|r|rr}
        \toprule
        dataset  & \# Graph & \# Avg. Node & \# Avg. Edge \\
        \midrule
        BBBP     & 2039 & 24.06 & 51.90 \\
        Tox21     & 7831 & 18.57 & 38.58 \\
        Toxcast  & 8575 & 18.78 & 38.52 \\
        SIDER     & 1427 & 33.64 & 70.71 \\
        ClinTox   & 1478 & 26.15 & 55.76 \\
        MUV      & \textbf{93087}& 24.23 & 52.55 \\
        HIV      & 41127& 24.51 & 54.93 \\
        BACE      & 1513 & 34.08 & 73.71 \\ \midrule
        IMDB-M  & 1500 & 19.77 & 96.53 \\
        IMDB-B  & 1000 & 13.00 & 65.94 \\
        MUTAG    & 188  & 17.93 & 19.79  \\
        PROTEINS & 1113 & 39.06 & 72.82 \\
        ENZYMES  & 600  & 32.63 & 62.14  \\
        MSRC\_21 & 563  & 77.52 & 198.32 \\
        RDT-M12K & \textbf{11929} &  391.41	& 456.89 \\
        \bottomrule
\end{tabular}
}
\end{minipage}
\end{table}

\clearpage

\subsection{Additional Experiments}

\subsubsection{Multi-task Fine-tuning}
Fine-tuning is originally proposed to transfer knowledge from pre-training and improve performance on downstream tasks. However, when conduct fine-tuning on multi-task, there can be a negative transfer, which means the performance is worse than it would be without pre-training.
We analyze performance on 12 tasks of Tox21 separately. 
Although fine-tuning improves the performance on some tasks, the remaining ones (see Fig~\ref{fig:negative}) performs even worse than those without pre-training.
Interestingly, \model shows improvements in almost all tasks. It confirms that only the essence of graph structure, i.e. generative patterns are universal across multiple tasks. Therefore, preserving these structural patterns is universally beneficial to all tasks.
\begin{figure}[h!]
    \begin{minipage} {0.53\linewidth}
    \centering
    \includegraphics[width=0.95\linewidth]{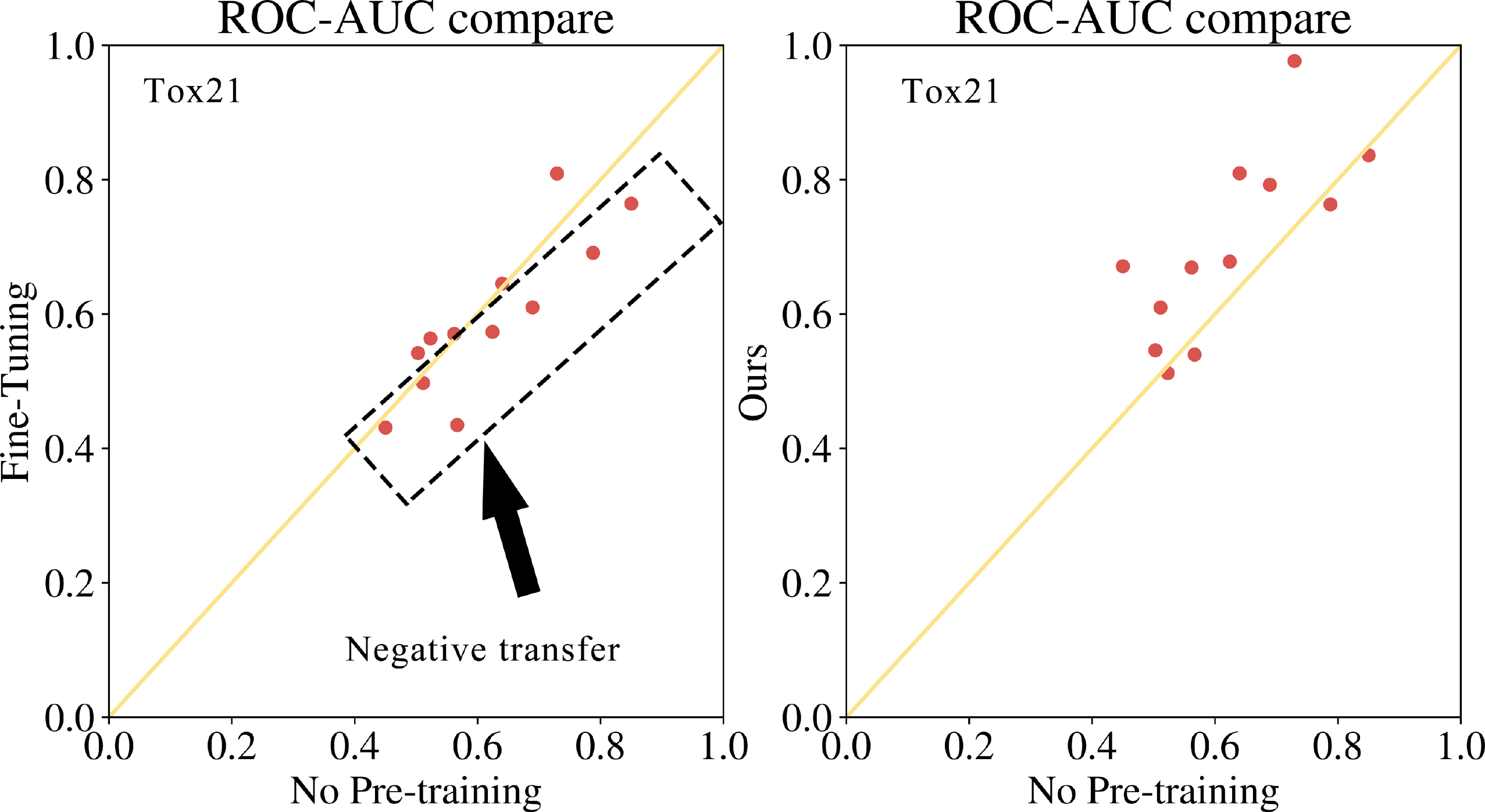}
    \setlength{\belowcaptionskip}{-0.5cm} 
    \caption{Performance comparisons between \model and vanilla tuning of multi-task dataset Tox21 versus the situation without pre-training. The dots under the yellow line indicate negative transfer.}
    \label{fig:negative}
    \end{minipage}
    \hfill
    \begin{minipage}{0.45\linewidth}
    \centering
    \includegraphics[width=0.90\textwidth]{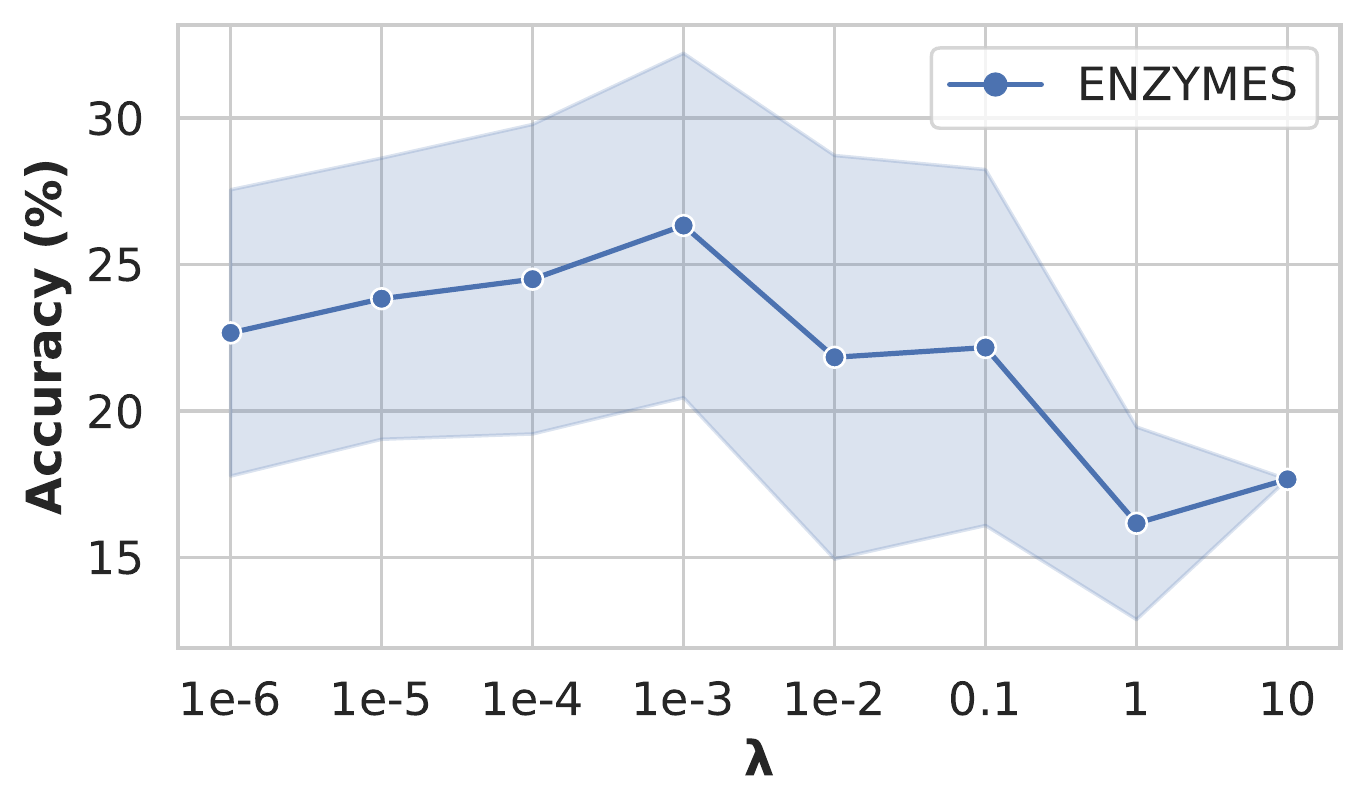}
    \caption{Performance versus magnitude of tradeoff parameter $\lambda$.}
    \label{fig:lambda}
    \end{minipage}
\end{figure}

\subsubsection{Hyper-parameter Sensitivity}

Without loss of generality, we show one dataset for each additional experiments on hyper-parameters.

 \vpara{Sensitivity of~$\lambda$} 
 As shown in Fig~\ref{fig:lambda}, the performance are improved stably compared with baselines as the $\lambda$ grows before $1e-3$. However, since the larger the $\lambda$ is, the more likely the downstream task is to be neglected. Thus, an excessive increase in $\lambda$ will lead to a decrease in effectiveness.
 
\vpara{Sensitivity of $M$}
As shown in Fig~7, the performance and the training time does not change much when changing $M$. The reason is mainly because when initializing learnable bases, we use the original structure of $C$ downstream graphs, and randomly sample from these matrices to same size $M$. 

\vpara{Sensitivity of $D$}
As shown in Fig~8, the performance benefits from a certain range of $D$. Neither with too large nor too small $D$, it is difficult for \model to grasp the essence of generative patterns and presents a relatively large variance due to the difficulty of convergence. The reason is that the estimation process effectively condense the generative patterns into the oracle graphon. Besides, the GW-divergency based loss is able to transfer these information to the predicted graphon and guide the pre-trained GNN to adapt to downstream graphs.

\begin{figure}[htbp]
  \centering
  \begin{minipage}[t]{0.48\linewidth}
    \centering
      \includegraphics[width=\linewidth]{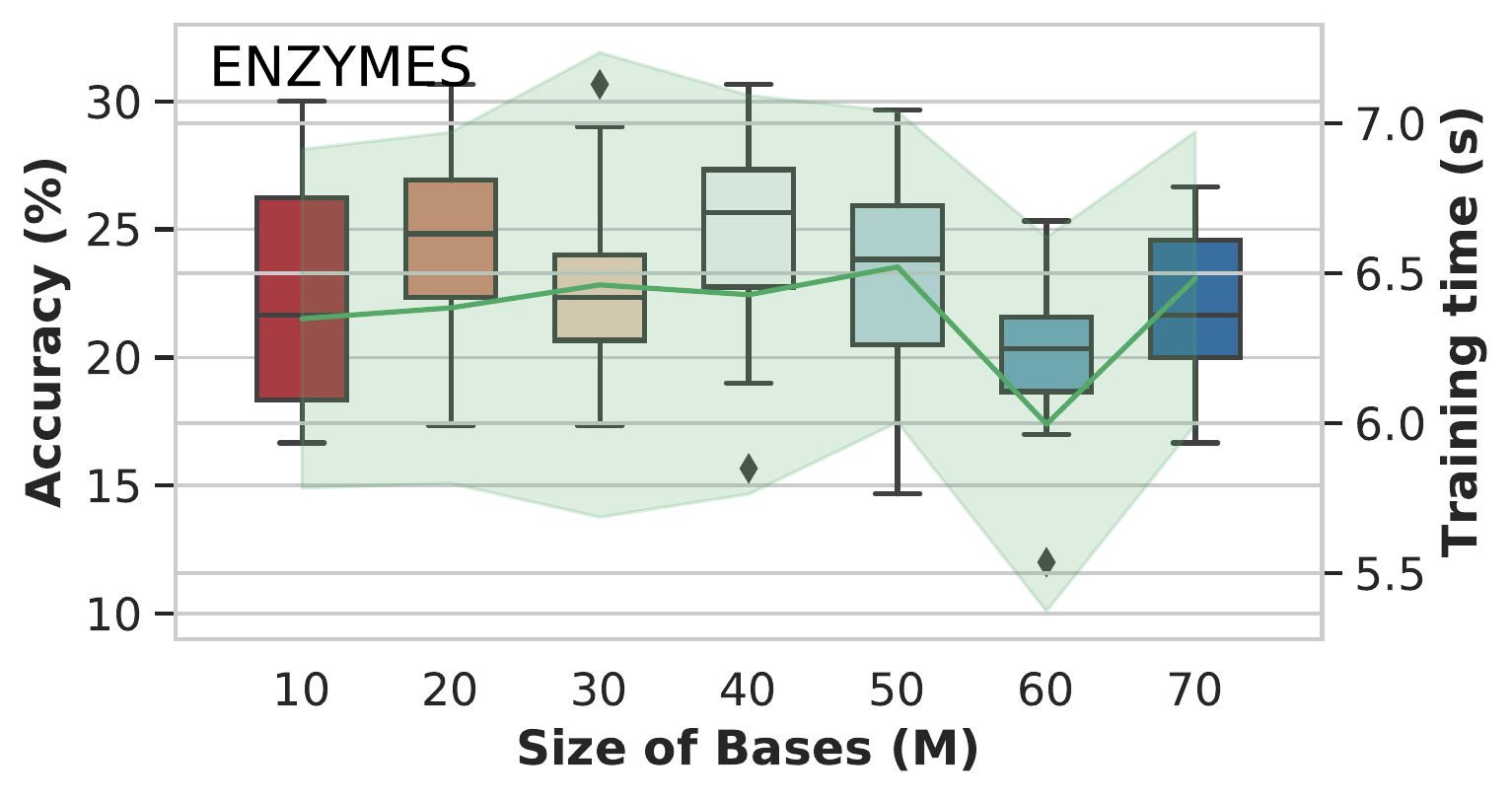}
    \label{fig:sizeM}
    \caption{Performance versus the size of bases $M$.}
  \end{minipage}
  \hfill
  \begin{minipage}[t]{0.48\linewidth}
    \centering
      \includegraphics[width=\linewidth]{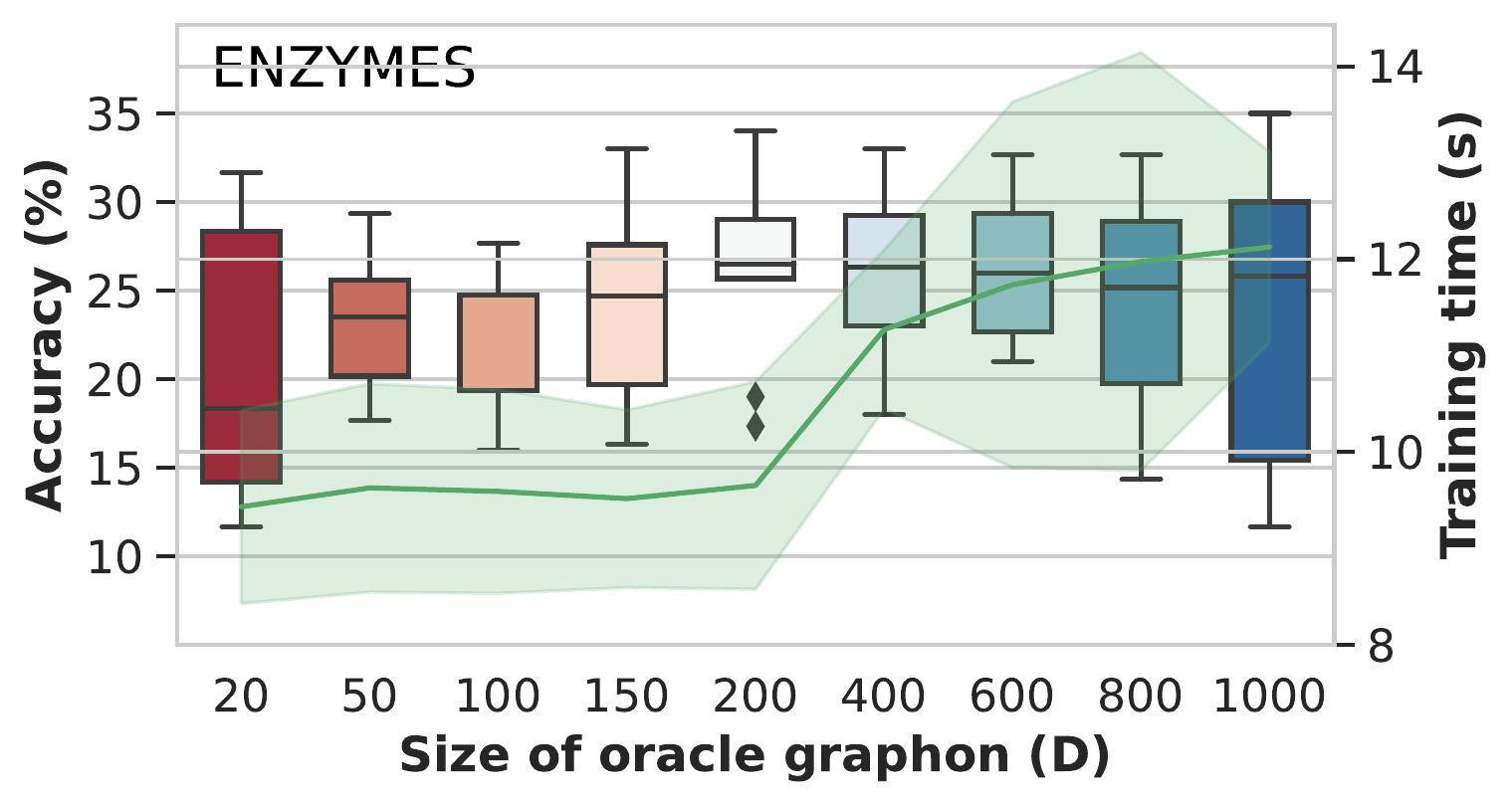}
    \label{fig:sensitivity of D}
    \caption{Performance versus the size of oracle graphon $D$.}
  \end{minipage}
  \label{fig:sizeD}
\end{figure}

} 

\end{document}